\theoremstyle{plain}
\newtheorem{theorem}{Theorem}[section]
\newtheorem{proposition}[theorem]{Proposition}
\theoremstyle{definition}
\newtheorem{definition}[theorem]{Definition}
\newtheorem{assumption}[theorem]{Assumption}
\theoremstyle{remark}
\theoremstyle{plain}
\theoremstyle{definition}
\theoremstyle{remark}
\newcommand{\impossible}{\color{red}{\ding{55}}}
\newcommand{\possible}{\color{green}{\checkmark}}
\newcommand{\Reals}{\mathbb{R}}
\newcommand{\CalDataset}{\mathcal{D}_\mathrm{cal}}
\newcommand{\HoldoutDataset}{\mathcal{D}_\mathrm{eval}}
\newcommand{\TestDataset}{\mathcal{D}_\mathrm{test}}
\newcommand{\Xt}{X_\mathrm{test}}
\newcommand{\Yt}{Y_\mathrm{test}}
\newcommand{\XYt}{\Xt, \Yt}
\newcommand{\XYi}{X_i, Y_i}
\newcommand{\calthreshold}{q_{\alpha}}
\newcommand{\predset}{\mathcal{C}_{\alpha}}
\newcommand{\robustpredset}{\overline{\mathcal{C}}_{\alpha, \epsilon}}
\newcommand{\inputspace}{\mathcal{X}}
\newcommand{\labelspace}{\mathcal{Y}}
\newcommand{\boule}{\mathcal{B}}
\newcommand{\bouleps}{\mathcal{B}_\epsilon}
\newcommand{\unionmetaset}{\overline{\mathcal{C}}_{\alpha, \epsilon}}
\newcommand{\intermetaset}{\underline{\mathcal{C}}_{\alpha, \epsilon}}
\newcommand{\R}{\mathbb{R}}
\DeclarePairedDelimiterX{\norm}[1]{\lVert}{\rVert}{#1}
\DeclarePairedDelimiterX{\abs}[1]{\lvert}{\rvert}{#1}
\newcommand{\Prob}{\mathbb{P}}
\newcommand{\Ex}[1]{\mathbb{E}\left[#1\right]}
\newcommand{\Prhol}[1]{\Prob_{\HoldoutDataset}\!\left(#1\right)}
\newcommand{\Prtes}[1]{\Prob_{\TestDataset}\!\left(#1\right)}
\newcommand{\cov}{\gamma}
\newcommand{\covmax}{\overline{\gamma}}
\newcommand{\covmin}{\underline{\gamma}}
\newcommand{\covmaxm}{\covmax_{m}}
\newcommand{\covminm}{\covmin_{m}}
\newcommand{\covmaxmp}{\covmaxm^{+}}
\newcommand{\covminmm}{\covminm^{-}}
\newcommand{\Epsilon}{\mathcal{E}}
\newcommand{\sie}{\underline{s}_{\epsilon}}
\newcommand{\sse}{\Bar{s}_{\epsilon}}
\DeclareMathAlphabet\mathbfcal{OMS}{cmsy}{b}{n}
\icmltitlerunning{Efficient Robust Conformal Prediction via Lipschitz-Bounded Networks}
\begin{document}

\twocolumn[
\icmltitle{Efficient Robust Conformal Prediction via Lipschitz-Bounded Networks}

\icmlsetsymbol{equal}{*}

\begin{icmlauthorlist}
\icmlauthor{Thomas Massena}{equal,irit,sncf}
\icmlauthor{L\'eo And\'eol}{equal,imt,sncf}
\icmlauthor{Thibaut Boissin}{irt}
\icmlauthor{Franck Mamalet}{irt}
\icmlauthor{Corentin Friedrich}{irt}
\icmlauthor{Mathieu Serrurier}{irit}
\icmlauthor{S\'ebastien Gerchinovitz}{irt,imt}
\end{icmlauthorlist}

\icmlaffiliation{irit}{IRIT}
\icmlaffiliation{sncf}{SNCF}
\icmlaffiliation{imt}{Institut de Mathématiques de Toulouse}
\icmlaffiliation{irt}{IRT Saint Exupery}

\icmlcorrespondingauthor{Thomas Massena}{thomas.massena@irit.fr}

\icmlkeywords{Conformal Prediction, Robustness, Lipschitz networks}

\vskip 0.3in
]



\printAffiliationsAndNotice{\icmlEqualContribution} 

\begin{abstract}
    Conformal Prediction (CP) has proven to be an effective post-hoc method for improving the trustworthiness of neural networks by providing prediction sets with finite-sample guarantees. However, under adversarial attacks, classical conformal guarantees do not hold anymore: this problem is addressed in the field of Robust Conformal Prediction. 
    Several methods have been proposed to provide robust CP sets with guarantees under adversarial perturbations, but, for large scale problems, these sets are either too large or the methods are too computationally demanding to be deployed in real life scenarios. 
    In this work, we propose a new method that leverages Lipschitz-bounded networks to precisely and efficiently estimate robust CP sets. When combined with a 1-Lipschitz robust network, we demonstrate that our \emph{lip-rcp} method outperforms state-of-the-art results in both the size of the robust CP sets and computational efficiency in medium and large-scale scenarios such as \textsl{ImageNet}. Taking a different angle, we also study vanilla\footnotemark CP under attack, and derive new worst-case coverage bounds of vanilla CP sets, which are valid simultaneously for all adversarial attack levels. Our \emph{lip-rcp} method makes this second approach as efficient as vanilla CP while also allowing robustness guarantees.
\end{abstract}

\section{Introduction}

With the development of neural networks, and their applications in industrial settings, the study of their reliability has become prevalent.
Many approaches have been studied to improve the trustworthiness of neural networks \cite{delseny2021whitepapermachinelearning}.
One of them is Uncertainty Quantification (UQ), which has become a key research domain for deploying safety-critical deep learning models. 
Its purpose is to provide additional information on the output of a model, indicating the confidence in its prediction.

We focus here on a UQ framework called Conformal Prediction (CP), which provides guarantees in nominal settings.
It is a finite-sample, distribution-free and model-agnostic framework that efficiently constructs prediction sets which contain the ground truth with high probability. 
The main underlying assumption is mild: calibration and test data points are assumed to be exchangeable (a lighter assumption than independence and identical distribution)~\cite{vovk}.
CP has successfully been applied to numerous fields: classification \cite{sadinle2019least, ding2024class}, regression \cite{papadopoulos2011regression, romano2019conformalized}, object detection \cite{andeol2023confident,timans2024adaptive}, semantic segmentation \cite{brunekreef2024kandinsky,mossina2024conformal} and generative models \cite{mohri2024language,teneggi2023trust}. 

Overall, CP represents a powerful tool in order to trust deep learning systems in their nominal settings.

\begin{figure*}[t!]
    \centering
    \includegraphics[width=0.9\linewidth]{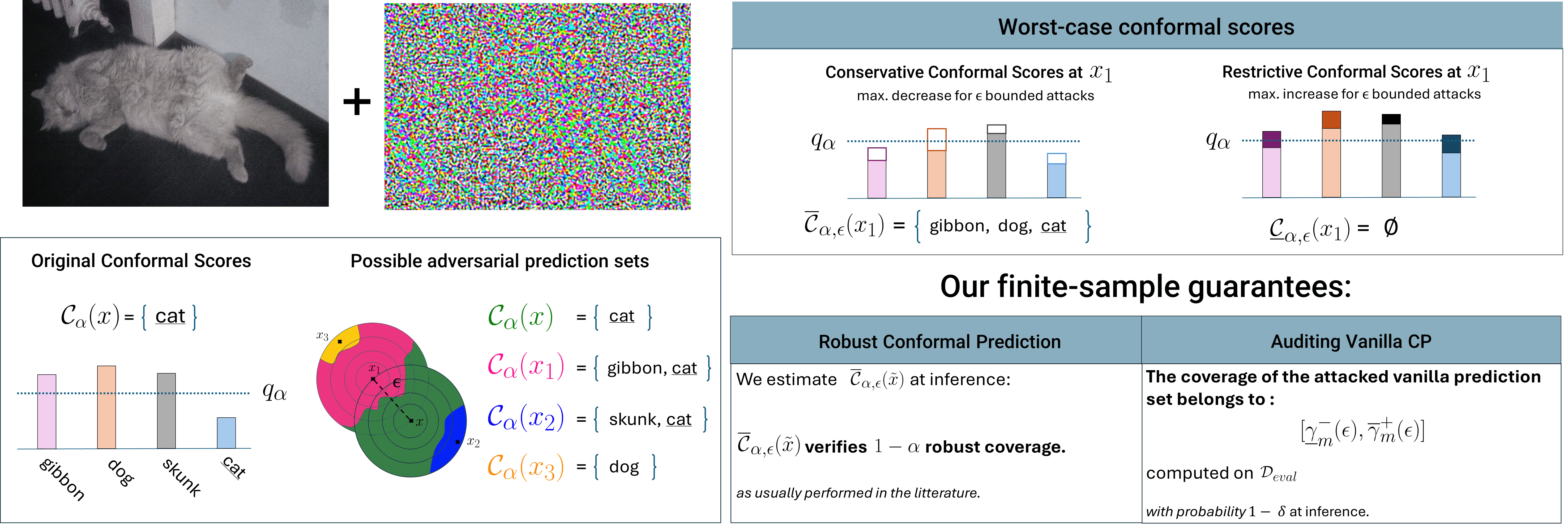}
    \caption{
        Our framework allows different certifiable guarantees for conformal prediction sets. Through the fast estimation of worst-case conformal prediction scores, the user can either return 
        $\unionmetaset(\Tilde{x})$
        which provably verifies Def.~\ref{def:robust_coverage} 
        (\S ~\ref{sec:efficient_robust_sets}). Or, compute worst-case conformal coverage bounds $\covminmm(\epsilon)$ and $\covmaxmp(\epsilon)$ of vanilla CP sets on $\TestDataset$ through an efficient post-hoc auditing process on  a holdout dataset $\HoldoutDataset$. 
        This guarantee holds with probability $1 - \delta$ (\S~\ref{sec:vanillacoverage}). Importantly, the latter's coverage bounds hold over all $\epsilon$ levels.
    }
    \label{fig:cat_illustration}
\end{figure*}

However, research on adversarial robustness has demonstrated that state-of-the-art models can be misled with minimal adversarial input perturbations \cite{szegedy2014intriguingpropertiesneuralnetworks}.
This vulnerability has led to extensive research into adversarial robustness, focusing on both attack strategies and defense mechanisms \cite{goodfellow2015explainingharnessingadversarialexamples,carlini2017evaluatingrobustnessneuralnetworks}.

However, some popular adversarial defense strategies have been shown not to hold in the face of more sophisticated attacks \cite{athalye2018obfuscatedgradientsfalsesense}.
Therefore, research on certifiable robustness aims to provide theoretical worst-case guarantees for model performance under adversarial attacks independently from the attack method. 

These certified robustness methods also exhibit interesting properties in the domains of Reinforcement Learning \cite{liprl, formalverificationtrajectory}, Differential Privacy \cite{lipdp,wu2024augmentsmoothreconcilingdifferential}, explainable AI \cite{serrurier2024explainableproperties1lipschitzneural,Fel_2023_CVPR}.

\footnotetext{Throughout this paper we refer to standard CP as vanilla CP.}

\vspace{1cm} 

Recently, some works in robust Conformal Prediction~\cite{li2024data,ledda2023adversarialattacksuncertaintyquantification,liu2024pitfallspromiseconformalinference} have demonstrated that minimal adversarial perturbations can undermine its associated guarantees.
Therefore, developing provably robust CP methods is a crucial objective in order to reconcile the guarantees of CP in nominal settings and the worst-case guarantees of certifiably robust systems.

Indeed, the vulnerability of CP methods to adversarial attacks raises significant concerns for deploying CP in safety-critical settings, where reliability is essential.

For this purpose, several approaches were proposed to construct robust conformal sets by using randomized smoothing methods~\cite{rscp,rscp_plus} or formal verification solvers~\cite{vrcp}. 
These methods inflate the CP sets in order for them to remain valid under adversarial conditions. 
Therefore, the performance of robust CP methods is measured by their ability to provide small and meaningful CP sets which maintain $1-\alpha$ conformal coverage under $\epsilon$-bounded perturbations.

Unfortunately, these works suffer from two penalties: they lack scalability, and/or provide conservative prediction sets that are too large for practical usage. We address these two shortcomings through the following set of contributions:

\begin{enumerate}
    \item We complement theoretical guarantees on robust CP from a different angle. We introduce a new method to audit vanilla CP's robustness to adversarial attacks. We derive the first sound coverage bounds for vanilla CP that are valid simultaneously across all attack levels.
    \item We provide a simple and efficient method, called \emph{lip-rcp}, to compute lower and upper bounds on CP scores under adversarial attacks. The method applies to any Lipschitz-bounded network, and can be used both to construct robust CP sets and to audit vanilla CP's robustness. It is tailored for efficiency and compatibility with real-time embedded systems.
    \item We propose to combine \emph{lip-rcp} with robust $1$-Lipschitz  networks with tight certified Lipschitz bounds. This allows to obtain tight estimates on worst-case variations of conformal prediction scores.
    \item We validate the whole approach across the \textsl{CIFAR-10}, \textsl{CIFAR-100}, \textsl{Tiny ImageNet} and \textsl{ImageNet} datasets. Our experiments showcase negligible computational overhead compared to vanilla CP, with best-in-class performances for both robust CP and vanilla CP's auditing.
\end{enumerate}

\section{Background and Related Works}

\subsection{Split Conformal Prediction} 

Multiple approaches to Conformal Prediction exist \cite{vovk,gentleintroductionconformalprediction}. We focus on Split Conformal Prediction~\cite{papadopoulos2002inductive}, a variant applied post-hoc using a separate data split. This approach has the advantage of being applicable to any model, even pre-trained on a different dataset. 
Throughout this work, we focus on classification tasks. We denote $\mathcal{X}$ as the input domain for a classifier $f : \mathcal{X} \rightarrow \Reals^{c}$ which maps inputs to class logits of labels $y \in \labelspace$ with $\text{Card}(\labelspace) = c$ the number of classes.  
Generally lowercase letters $x,y$ refer to deterministic examples, while uppercase letters $X,Y$ are for random variables.
Split CP methods construct prediction sets $\predset(x)$ that contain the true label with probability at least $1-\alpha$, where $\alpha\in\left[\frac{1}{n_\mathrm{cal} + 1}, 1\right)$
is a user-specified risk.\footnote{Though some papers might mention the interval $\left(0, 1\right)$, this is a slight abuse of notation, since split CP methods are only well defined for $\alpha \geq 1/(n_\mathrm{cal} + 1)$.} 

For this purpose, CP defines $s : \mathcal{X} \times \mathcal{Y} \rightarrow \Reals$, a non-conformity score that measures the degree of incorrectness of a prediction $f(x)$ for a ground truth label $y$. On a holdout ``calibration'' data split $\CalDataset \in ( \inputspace \times \labelspace)^{n_\mathrm{cal}}$, the non-conformity score is computed for each instance, denoted as $R_i=s(X_i, Y_i)$ for $i\in\{1,\dots,n_\mathrm{cal}\}$, along with the quantile of the score $\calthreshold=\overrightarrow{R}_{\lceil (n_\mathrm{cal}+1)(1-\alpha)\rceil}$ (where $\overrightarrow{R}$ represents the $R_i$ scores sorted in ascending order). For a given example $x_\mathrm{test}$, the prediction set is then defined as $\predset(x_\mathrm{test})=\{ y \in \mathcal{Y} \ : \ s(x_\mathrm{test},y) \leq \calthreshold \}$. This formulation is quite intuitive: the prediction set $\predset(x_\mathrm{test})$ is defined as all the labels $y$ that would lead to a non-conformity score below $\calthreshold$, and would place among the $1-\alpha$ most conforming pairs compared to the calibration set. This set then satisfies the following \emph{coverage guarantee}.
\begin{theorem}[\citealt{vovk}]
\label{thm:vovk}
    Let $\CalDataset = \{ (X_i,Y_i) \}_{1 \leq i \leq n_\mathrm{cal}}$ and $(X_\mathrm{test},Y_\mathrm{test})$ be exchangeable random variables. For any non-conformity score $s : \inputspace \times \labelspace \rightarrow \Reals$ and any user-specified risk $\alpha\in\left[\frac{1}{n_\mathrm{cal}+1},1\right)$, the prediction set $\predset(X_\mathrm{test}) = \{y \in \labelspace: s(X_\mathrm{test},y) \leq \calthreshold \}$ satisfies:
    \begin{equation}
        \mathbb{P} \left(Y_\mathrm{test} \in \predset(X_\mathrm{test}) \right) \geq 1 - \alpha \;,
    \end{equation}
where the probability is taken over the random draws of both $\CalDataset$ and $(X_\mathrm{test},Y_\mathrm{test})$.
\end{theorem}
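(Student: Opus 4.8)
The plan is to prove the classical split-CP coverage guarantee by reducing the event $\{Y_\mathrm{test} \notin \predset(X_\mathrm{test})\}$ to a statement about the rank of the test score among the calibration scores, and then invoking exchangeability to control that rank. First I would write out the failure event explicitly: by definition of the prediction set, $Y_\mathrm{test} \notin \predset(X_\mathrm{test})$ is exactly the event $s(X_\mathrm{test}, Y_\mathrm{test}) > \calthreshold = \overrightarrow{R}_{\lceil (n_\mathrm{cal}+1)(1-\alpha)\rceil}$. Setting $R_{n_\mathrm{cal}+1} \eqdef s(X_\mathrm{test}, Y_\mathrm{test})$ and $k \eqdef \lceil (n_\mathrm{cal}+1)(1-\alpha)\rceil$, this says $R_{n_\mathrm{cal}+1}$ is strictly greater than the $k$-th smallest value among $R_1, \dots, R_{n_\mathrm{cal}}$, which forces $R_{n_\mathrm{cal}+1}$ to occupy one of the ranks $k+1, \dots, n_\mathrm{cal}+1$ among the full collection $R_1, \dots, R_{n_\mathrm{cal}+1}$ (using, say, the convention that ties are broken by index, so the rank is well defined).

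Next I would exploit exchangeability. Since $(X_1,Y_1), \dots, (X_{n_\mathrm{cal}},Y_{n_\mathrm{cal}}), (X_\mathrm{test},Y_\mathrm{test})$ are exchangeable and $R_i = s(X_i,Y_i)$ is a fixed measurable function applied coordinatewise, the vector $(R_1, \dots, R_{n_\mathrm{cal}+1})$ is exchangeable. Hence the rank of $R_{n_\mathrm{cal}+1}$ in this vector (again with a fixed tie-breaking rule, e.g. lexicographic in the index, so that the ranks form a genuine permutation of $\{1,\dots,n_\mathrm{cal}+1\}$) is uniformly distributed on $\{1, \dots, n_\mathrm{cal}+1\}$. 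Therefore
\begin{equation}
    \Prob\big(Y_\mathrm{test} \notin \predset(X_\mathrm{test})\big) \leq \Prob\big(\mathrm{rank}(R_{n_\mathrm{cal}+1}) \geq k+1\big) = \frac{n_\mathrm{cal}+1-k}{n_\mathrm{cal}+1}.
\end{equation}
Finally, since $k = \lceil (n_\mathrm{cal}+1)(1-\alpha)\rceil \geq (n_\mathrm{cal}+1)(1-\alpha)$, we get $\frac{n_\mathrm{cal}+1-k}{n_\mathrm{cal}+1} \leq \frac{n_\mathrm{cal}+1 - (n_\mathrm{cal}+1)(1-\alpha)}{n_\mathrm{cal}+1} = \alpha$, which yields $\Prob(Y_\mathrm{test} \in \predset(X_\mathrm{test})) \geq 1-\alpha$, as claimed. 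The constraint $\alpha \geq 1/(n_\mathrm{cal}+1)$ ensures $k \leq n_\mathrm{cal}+1$ so that $\calthreshold$ is well defined as an order statistic of the calibration scores.

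The main subtlety — and the only place requiring care — is the handling of ties among the scores. If several $R_i$ coincide, the "rank" is not literally a permutation, and the clean uniform-distribution argument needs a fixed deterministic tie-breaking rule (or an inequality argument bounding $\Prob(R_{n_\mathrm{cal}+1} > \overrightarrow{R}_k)$ directly, noting that at most $n_\mathrm{cal}+1-k$ of the exchangeable values can be strict upper outliers). I would state the tie-breaking convention up front, verify that under it the induced ranks are a uniform random permutation by exchangeability, and then the displayed chain of inequalities goes through verbatim; the $\lceil\cdot\rceil$ is precisely what absorbs the non-integrality of $(n_\mathrm{cal}+1)(1-\alpha)$.
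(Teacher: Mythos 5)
The paper does not actually prove Theorem~\ref{thm:vovk}: it is quoted as background from \citet{vovk}, so there is no in-paper argument to compare against, and your proof is the standard (and correct) exchangeability/rank argument for split conformal prediction. One small imprecision: breaking ties deterministically by index does not make the rank of $R_{n_\mathrm{cal}+1}$ exactly uniform on $\{1,\dots,n_\mathrm{cal}+1\}$ (if all scores coincide, that rank is deterministically $n_\mathrm{cal}+1$ under an index-based rule), but this is harmless here because the failure event requires $R_{n_\mathrm{cal}+1}$ to be \emph{strictly} larger than at least $k$ calibration scores, and your own fallback — at most $n_\mathrm{cal}+1-k$ of the $n_\mathrm{cal}+1$ exchangeable scores can each have at least $k$ others strictly below them, so by exchangeability the failure probability is at most $(n_\mathrm{cal}+1-k)/(n_\mathrm{cal}+1)$ — closes this gap cleanly without any tie-breaking convention. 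The remaining chain, including $k=\lceil (n_\mathrm{cal}+1)(1-\alpha)\rceil \geq (n_\mathrm{cal}+1)(1-\alpha)$ and the role of $\alpha \geq 1/(n_\mathrm{cal}+1)$ in keeping the quantile well defined, is correct.
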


This result allows turning predictions from any black-box model $f$ into prediction sets with a guaranteed risk level. However, this guarantee is marginal: the error rate of $\alpha$ only bounds an average error over all possible values of $X_\mathrm{test}$.

\subsection{Robust Conformal Prediction}
\label{sec:robustCP-relatedworks}

Extending the CP guarantee of Theorem~\ref{thm:vovk} to cover adversarial conditions was first explored in the seminal work of~\citet{rscp}. In this paper, the following definition of Robust Conformal Prediction is given. For any $x \in \mathcal{X}$, we write $\bouleps(x)=\{x' \in \mathcal{X}: \Vert x'-x \Vert \leq \epsilon\}$.

\begin{definition}[Robust Conformal Prediction]
    Let $\CalDataset = \{ (X_i,Y_i) \}_{1 \leq i \leq n_\mathrm{cal}}$ and $(X_\mathrm{test},Y_\mathrm{test})$ be exchangeable random variables. A prediction set~$\robustpredset$ is said to be robust to $\epsilon$-bounded perturbations if for any random variable $\Tilde{X}_\mathrm{test}$ such that $\Tilde{X}_\mathrm{test} \in \bouleps(X_\mathrm{test})$ almost surely,

    \begin{equation}
        \mathbb{P} \left[Y_\mathrm{test} \in \robustpredset(\Tilde{X}_\mathrm{test}) \right] \geq 1 - \alpha.
    \end{equation}
\label{def:robust_coverage}
\end{definition}

Enforcing Def.~\ref{def:robust_coverage} ensures the coverage guarantee of Theorem~\ref{thm:vovk} in worst-case adversarial conditions for any sample under $\epsilon$-bounded adversarial perturbations. 

The initial intuition behind robust CP methods is the following: by computing provable lower bounds for conformal prediction scores under attack, it is possible to guarantee robust CP coverage as defined in Def.~\ref{def:robust_coverage}. We recall the following concepts, which appear under several wordings in the literature.

\begin{definition}
    A \emph{conservative score} under $\epsilon$-bounded adversarial perturbations is any function $\underline{s}:\inputspace \times \labelspace \to \R$ such that, for all $(x,y) \in \inputspace \times \labelspace$,
    \begin{equation}
        \underline{s}(x,y) \leq \inf_{\Tilde{x}\in\bouleps(x)}s(\Tilde{x},y) \;.
    \label{eq:conservativepredscore}
    \end{equation}
\end{definition}

\begin{definition}
    Let $x \in \inputspace$. A \emph{robust prediction set} (or \emph{conservative prediction set}) under $\epsilon$-bounded adversarial perturbations around $x$ is defined as:
    \begin{equation}
    \label{eq:defconservativeset}
        \unionmetaset(x) = \{y \in \labelspace: \underline{s}(x,y) \leq \calthreshold\} \;,
    \end{equation}
    where $\underline{s}$ is a conservative score under $\epsilon$-bounded adversarial perturbations.
\label{def:robust_predset}
\end{definition}

\paragraph{General formulation} Importantly, the robust prediction set of Def.~\ref{def:robust_predset} verifies robust CP coverage as defined in Def.~\ref{def:robust_coverage}, and as demonstrated in \citet{vrcp}.
Moreover, as a general rule, robust CP methods revolve around the tight estimation of the conservative score of Eq.~\eqref{eq:conservativepredscore} in order to verify robust CP coverage. We now discuss the specifics of these computations.

\subsection{Related Works} 
\label{sec:relatedWorks}

\paragraph{Monte-Carlo methods} In order to compute robust CP sets, the initial work of~\citet{rscp} leverages a \textit{smoothed score function} from the framework of randomized smoothing \cite{cohen2019certifiedadversarialrobustnessrandomized}. The \emph{RSCP} score is defined as:

\begin{equation}
    \Tilde{s}(x,y) = \Phi^{-1} \left( \mathbb{E}_{\Delta \in \mathcal{N}(0,\sigma^2.I)}[s(x+\Delta,y)] \right),
\label{eq:smoothed_score}
\end{equation}

with $\Phi^{-1}$ the inverse CDF of the standard normal distribution of standard deviation $\sigma$, which satisfies for any $(x,y) \in \inputspace \times \labelspace$:

\begin{equation}
    \underline{s}_{\emph{RSCP}} (x,y) = \Tilde{s}(x,y) - \frac{\epsilon}{\sigma},
\end{equation}
therefore providing a conservative score which will be used to enforce robust CP. However, due to the unknown nature of the expectation in Eq.~\eqref{eq:smoothed_score}, it is estimated by a Monte Carlo (MC) procedure for which no correction is applied. 

Later work \cite{rscp_plus} enables provable robustness coined as the \emph{RSCP+} method by introducing both a corrective factor on the expectation estimation and accounting for the error probability of the MC estimation process by adjusting the calibration threshold. Additionally, this paper introduces \emph{PTT}, an extension to their method with the use of an extra $\mathcal{D}_\mathrm{holdout}$ data split to obtain better robust CP metrics.
Similarly, the \emph{CAS} method relies on CDF-Aware Sets \cite{cas} for certifiably robust CP. These \emph{CAS} sets provide a generally tighter bound for smoothed CP scores than \emph{RSCP}. In an independent work to ours, \citet{bincp} propose a binary-certificate-based method (\emph{BinCP}) for robust CP with improved sample efficiency and tighter binary certificate bounds.

\paragraph{Deterministic methods} The authors of \emph{VRCP}~\cite{vrcp}
leverage formal verification-based solvers~\cite{katz2017reluplexefficientsmtsolver} to provide a deterministic lower bound of Eq.~\eqref{eq:conservativepredscore} valid locally around $x$. Interestingly, they devise two methods to achieve robust CP: \emph{VRCP-I} uses a standard vanilla CP calibration procedure, then computes conservative scores around each test point $x_\mathrm{test}$. These scores are then used to form the robust CP set $\unionmetaset(x_\mathrm{test})$. \emph{VRCP-C} however computes conservative scores around each pair $(x_i,y_i) \in \CalDataset$ and then calibrates the model using these conservative scores to provide a robust CP guarantee which holds at inference and requires no extra computations at test time.
Unfortunately, computing \textit{exact} bounds for Eq.~\eqref{eq:conservativepredscore} is often untractable for neural networks. Therefore, formal verification methods use \textit{incomplete} solvers to compute loose bounds on small to medium size model architectures.

Independently, \citet{prcp} introduced a novel ``quantile of quantiles'' method for robust CP based on calibration-time operations. 
In order to mitigate the set size inflation phenomenon that is inherent to robust CP, the \emph{PRCP} method provides a relaxed ``average''  coverage guarantee over a predefined noise distribution.

These guarantees are applicable on top of most robust CP methods and can be seen as orthogonal to all previously mentioned works, as argued in~\citet{cas}. Moreover, complementing previous methods, \citet{aolaritei2025conformal} propose an extended framework that also provides robustness against global perturbations.

\paragraph{Scaling robust CP} Current methods for estimating conservative scores exhibit severe scalability issues. Indeed, verification methods have computational complexities that grow quadratically with the number of neurons and layers of the model. Moreover, randomized smoothing methods require $n_\mathrm{mc}$ times more memory at inference time to run an MC estimation process. Also, some theoretical works indicate that the certifiable radii of smoothing methods suffers greatly from high input dimensionality \cite{wu2024augmentsmoothreconcilingdifferential}. 

Overall, these different limitations represent an obstacle to scaling robust CP to larger inputs, models and generally harder tasks such as the \textsl{ImageNet} dataset for example. In this paper, we present a framework for robust CP with deterministic Lipschitz bounds that has \textit{no such limitations} and avoids time or memory complexity issues at scale.

\paragraph{Paper Outline.} The paper is organized as follows. In Section~\ref{sec:vanillacoverage} we start by complementing theoretical guarantees on CP's robustness by designing a new auditing process for vanilla CP under attack. We prove coverage bounds that are valid simultaneously across all attack levels. In Section~\ref{sec:efficient_robust_sets}, we introduce our \emph{lip-rcp} method and leverage Lipschitz-bounded neural networks to efficiently and accurately approximate local variations of CP scores. This enables efficient robust CP set construction and vanilla CP auditing. Finally, when combined with Lipschitz-by-design networks, we demonstrate the efficiency and superior performances of our approach on several image classification datasets (Section~\ref{sec:results}).

\section{Certifiable Coverage Bounds for Vanilla CP Under Attack}
\label{sec:vanillacoverage}

In this section, we address the robustness of CP methods to adversarial attacks from a different angle than in Sections~\ref{sec:robustCP-relatedworks} and~\ref{sec:relatedWorks}. Instead of enlarging vanilla CP's prediction sets to make them robust by design, we provide tools to reliably evaluate by how much the nominal $1-\alpha$ coverage of vanilla CP for clean data is impacted under attack. This can be useful when prediction sets of small size are desired, yet knowledge of certifiable coverage guarantees under bounded perturbations are still required. 
The question of auditing vanilla CP under attacks was already raised earlier \cite{rscp, cas}. We provide a rigorous answer below. As a benefit, our guarantee holds simultaneously for all perturbation budgets $\epsilon>0$.

We show in Section~\ref{sec:efficient_robust_sets} how to \emph{efficiently} implement methods from both approaches (Section~\ref{sec:robustCP-relatedworks} and this section) when working with Lipschitz-bounded neural networks.

\subsection{Setting: Vanilla CP Under Attack}

Let $h$ be any function that constructs an adversarial example $h(x,\epsilon) \in \bouleps(x)$ from any input $x \in \mathcal{X}$, with an attack budget at most of $\epsilon$, i.e., $\Vert h(x,\epsilon) - x \Vert \leq \epsilon$. We define the \emph{coverage under attack} by
\begin{align}
    \cov_h(\epsilon)=\Prtes{Y_\mathrm{test} \in \predset(h(X_\mathrm{test},\epsilon)}\;,
\end{align}
where the probability is taken with respect to the random pair $(X_\mathrm{test},Y_\mathrm{test})$. In more mathematical terms, the probability is conditional to $\CalDataset$; all statements in this section are valid for any realization of $\CalDataset$.
Note that here, contrary to robust CP, we consider the vanilla CP set $C_\alpha$.

In the sequel, we show how to provide bounds on $\cov_h(\epsilon)$. Since the attack $h$ is unknown, we introduce two functions $\covmin$ and $\covmax$ that can be reliably approximated, and for which $\covmin(\epsilon) \leq \cov_h(\epsilon) \leq \covmax(\epsilon)$. To that end, we define both conservative and restrictive prediction sets as follows; $\unionmetaset(x)$ is a tight version of \eqref{eq:defconservativeset}, while $\intermetaset(x)$ is new.

\begin{definition}[Conservative/Restrictive Prediction Set]
    Consider the two scores
    \begin{align}
        \underline{s}(x,y) = \inf_{\Tilde{x} \in \bouleps(x)} s(\Tilde{x},y) \\
        \Bar{s}(x,y) = \sup_{\Tilde{x} \in \bouleps(x)} s(\Tilde{x},y) \;.
    \end{align}
    For any $x \in \inputspace$, we define the \textit{conservative prediction set} by $\unionmetaset(x) = \{ y \in \mathcal{Y} : \underline{s}(x,y) \leq \calthreshold\}$, and the \textit{restrictive prediction set} by $\intermetaset(x) = \{y \in \mathcal{Y}: \Bar{s}(x,y) \leq \calthreshold\}$.
\label{def:union-inter-metaset}
\end{definition}

When $x \mapsto s(x,y)$ is Lipschitz, both sets can be efficiently (and provably) approximated, as shown in Section~\ref{sec:efficient_robust_sets}.

We are now ready to introduce the two functions $\covmin$ and $\covmax$, defined for all $\epsilon \geq 0$ by
\begin{align}
    \covmin(\epsilon) = \Prtes{Y_\mathrm{test} \in \intermetaset(X_\mathrm{test})} \\
    \covmax(\epsilon) = \Prtes{Y_\mathrm{test} \in \unionmetaset(X_\mathrm{test})}
\end{align}
Noting that $\underline{s}(x,y) \leq s(h(x,\epsilon),y) \leq \Bar{s}(x,y)$ and therefore $\intermetaset(\cdot)\subseteq C_\alpha(h(\cdot, \epsilon)) \subseteq \unionmetaset(\cdot)$, we can see that $\covmin(\epsilon) \leq \cov_h(\epsilon) \leq \covmax(\epsilon)$ for all $\epsilon \geq 0$. Therefore, though not used by vanilla CP, the sets $\intermetaset(X_\mathrm{test})$ and $\unionmetaset(X_\mathrm{test})$ are instrumental in controlling the coverage under attack.

\subsection{A Provable and Tight Control on $\covmin$ and $\covmax$}

The question of bounding $\covmin(\epsilon)$ from below, i.e., guaranteeing a minimal coverage of vanilla CP under attack, already appeared in Theorem~2 by \citet{rscp}. We however realized that a subtle mathematical mistake (of an overfitting flavor) was unfortunately left aside, which impacts the correctness of their guarantee; see Appendix~\ref{app:GendlerMistake} for details. The same mistake was reproduced in the second part of Proposition~5.1 by \citet{cas}.

To overcome this overfitting-type issue, we work with an additional dataset $\HoldoutDataset$ of $m$ data points drawn i.i.d. from the same distribution but independently from $\CalDataset$. Denote these points by $(X_i,Y_i)_{1 \leq i \leq m}$.

Let $\mathds{1}$ denote the indicator function. We define empirical counterparts (based on $\HoldoutDataset$) of $\covmin(\epsilon)$ and $\covmax(\epsilon)$ as follows: for all $\epsilon \geq 0$,
\begin{align}
    \covmaxm(\epsilon) &= \frac{1}{m} \sum_{i=1}^{m} \mathds{1}\{Y_i \in \unionmetaset(X_i)\}\label{eq:covmaxm}\\
    \covminm(\epsilon) &= \frac{1}{m} \sum_{i=1}^{m} \mathds{1}\{Y_i \in \intermetaset(X_i)\} \label{eq:covminm}
\end{align}

To account for statistical deviations, next we work with slightly corrected estimators studied earlier, e.g., by  \citet{langford2005tutorial}. For $m \geq 1$, $p \in [0,1]$, and $0 \leq k \leq m$, we write $F_{m,p}(k) = \sum_{j=0}^k {m \choose j} p^j (1-p)^j$ for the cumulative distribution function of the $\textrm{Binomial}(m,p)$ distribution. For any attack budget $\epsilon \geq 0$ and any risk level $\delta \in (0,1)$, we define
\begin{align}
    \covmaxmp(\epsilon,\delta) &= \max \Bigl\{p \in [0,1] : F_{m,p}\bigl(m\covmaxm(\epsilon)\bigr)\geq \delta \Bigr\} \label{eq:covmaxmp}\\
    \covminmm(\epsilon,\delta) &= 1\! -\! \max\Bigl\{p \in [0,1]\!:\!F_{m,p}\bigl(m(1\!-\! \covminm\!(\epsilon))\bigr) \! \geq \delta\Bigr\}\label{eq:covminmm}
\end{align}
Though technical at first sight, these estimators are tailored for the binomial distribution and thus tighter than, e.g., high-probability bounds obtained from Hoeffding's bound. Given $\covmaxm(\epsilon)$ and $\covminm(\epsilon)$, they can be efficiently computed with a binary search (by monotonicity of $p \mapsto F_{m,p}(k)$).

Next we work under the following mild assumptions on the input space $\mathcal{X}$ and the non-conformity score $s(x,y)$.

\begin{assumption} 
\label{assum:X-s}
\ \\[-0.7cm]
\begin{enumerate}[label=(A\arabic*)]
    \item $\mathcal{X} \subset \R^d$ is convex and closed (for some $d \geq 1$);
    \item $x \in \mathcal{X} \mapsto s(x, y)$ is continuous for all $y\in\mathcal{Y}$.
\end{enumerate}    
\end{assumption}

The first condition (A1) is satisfied, e.g., for classical vector spaces used to represent images (such as $\R^{H \cdot W \cdot 3}$, with $H$ and $W$ respectively the height and width of the images). Importantly, the assumption is on the underlying space $\mathcal{X}$ and not on the probability distribution over it. Therefore, in our application setting, it holds true even if the distribution of images has a nonconvex support. The second condition (A2) is satisfied whenever the score is of the form $s(x,y)=\psi(f(x),y)$, for a continuous model $f$ and some continuous  function $u \mapsto \psi(u,y)$. In our setting, the assumption always holds true, due to both $f$ and $s(\cdot, y)$ being Lipschitz-continuous.\footnote{Note that, in this section, we do not assume that $x \mapsto s(x,y)$ is smooth (beyond continuity), contrary to scores obtained after randomized smoothing.}

The main result of this section is the following. 

\begin{theorem}
\label{thm:robustcoverageuniform}
Suppose that Assumption~\ref{assum:X-s} holds true. Assume also that $\CalDataset, \HoldoutDataset, \TestDataset$ are made of i.i.d. pairs $(X_i,Y_i)$. Let $q_\alpha$ be the empirical quantile computed by vanilla CP using $\CalDataset$, and let $m \geq 2$ be the number of points in $\HoldoutDataset$.

Let $\delta \in (0,1)$ and set $\delta' = \delta/(2m - 2)$. Then, for any attack function $h$ and almost every $\CalDataset$,
\begin{equation}
	\Prhol{\forall \epsilon > 0, \ \covminmm(\epsilon,\delta') \leq \cov_h (\epsilon) \leq \covmaxmp(\epsilon,\delta')}\geq 1-\delta,
\end{equation}
where the probability is over the random draw of $\HoldoutDataset$.\footnote{In more formal terms, the probability $\Prhol{\cdot}$ is conditional to $\CalDataset$, and our inequality holds almost surely.}
\end{theorem}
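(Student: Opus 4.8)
The plan is to condition on $\CalDataset$ first. Any realization of $\CalDataset$ fixes $\calthreshold$, hence the worst-case scores $\underline s,\Bar s$ of Definition~\ref{def:union-inter-metaset}, hence the sets $\intermetaset(\cdot),\unionmetaset(\cdot)$, and therefore the \emph{deterministic} functions $\epsilon\mapsto\covmin(\epsilon)$, $\covmax(\epsilon)$ and $\cov_h(\epsilon)$. The inclusions $\intermetaset(\cdot)\subseteq\predset(h(\cdot,\epsilon))\subseteq\unionmetaset(\cdot)$ recorded in the main text already give $\covmin(\epsilon)\le\cov_h(\epsilon)\le\covmax(\epsilon)$ for every $\epsilon\ge 0$. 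So it is enough to show, over the draw of $\HoldoutDataset$, that $\Prhol{\forall\epsilon>0,\ \covmax(\epsilon)\le\covmaxmp(\epsilon,\delta')}\ge 1-\delta/2$ together with its mirror image $\Prhol{\forall\epsilon>0,\ \covminmm(\epsilon,\delta')\le\covmin(\epsilon)}\ge 1-\delta/2$, and then intersect the two events, using $\delta/2=(m-1)\delta'$. I would treat only the upper band; the lower one follows by swapping $\underline s$ for $\Bar s$ (hence $\unionmetaset$ for $\intermetaset$) and reversing the direction of monotonicity in $\epsilon$.

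\textbf{Collapsing $\epsilon$ onto finitely many order statistics.}
First I would use Assumption~\ref{assum:X-s} to show that $\epsilon\mapsto\underline s(x,y)=\inf_{\Tilde x\in\bouleps(x)}s(\Tilde x,y)$ is non-increasing and \emph{continuous}: right-continuity comes from continuity of $s$ and compactness of $\bouleps(x)\cap\mathcal X$ (closedness in (A1) plus (A2), via a nested-intersection argument), and left-continuity uses convexity of $\mathcal X$ from (A1) (scale any near-minimizer over $\boule_{\epsilon_0}(x)\cap\mathcal X$ towards $x$ along the segment, which remains in $\mathcal X$). Setting $T_i:=\inf\{\epsilon>0:Y_i\in\unionmetaset(X_i)\}\in[0,\infty]$, this yields $\{Y_i\in\unionmetaset(X_i)\text{ at budget }\epsilon\}=\{\epsilon\ge T_i\}$, and moreover $T_i=0\iff s(X_i,Y_i)\le\calthreshold\iff Y_i\in\predset(X_i)$. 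The $T_i$ (and a fresh test copy $T$) are i.i.d.\ given $\CalDataset$, so $\covmax(\epsilon)$ equals the c.d.f.\ of $T$ evaluated at $\epsilon$, while $m\,\covmaxm(\epsilon)=\#\{i:T_i\le\epsilon\}$ is $m$ times its empirical c.d.f. Consequently $\covmaxmp(\epsilon,\delta')$, which depends on $\epsilon$ only through the integer $m\,\covmaxm(\epsilon)$, is a non-decreasing step function jumping only at the order statistics $T_{(1)}\le\dots\le T_{(m)}$, equal to $g_{\delta'}(k):=\max\{p\in[0,1]:F_{m,p}(k)\ge\delta'\}$ on the range where exactly $k$ of the $T_i$ satisfy $T_i\le\epsilon$; in particular $g_{\delta'}(m)=1$, so the bound is vacuous as soon as $\epsilon\ge T_{(m)}$.

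\textbf{Per-level Clopper--Pearson control, then a union bound.}
Since $\covmax$ is non-decreasing, the upper band fails somewhere on $(0,\infty)$ iff $\covmax\!\bigl(T_{(k+1)}^-\bigr)>g_{\delta'}(k)$ for some level $k$ with nonempty range (set $T_{(0)}:=0$); hence it suffices to guarantee $\covmax(T_{(j)})\le g_{\delta'}(j-1)$ at the at most $m-1$ nontrivial levels $j$ (the top level being vacuous, ties among the $T_{(j)}$ being absorbed). For a fixed $j$, $\#\{i:T_i\le T_{(j)}\}$ is $\mathrm{Binomial}\!\bigl(m,\covmax(T_{(j)})\bigr)$ marginally, and the probability-integral transform --- using only the monotonicity and continuity above, away from the atoms of $T$ at $0$ and $+\infty$ --- makes $\covmax(T_{(j)})$ stochastically dominated by the $j$-th order statistic of $m$ i.i.d.\ uniforms; since $g_{\delta'}$ is built so that $F_{m,\,g_{\delta'}(j-1)}(j-1)=\delta'$ and $\Prob\!\bigl(\mathrm{Bin}(m,q)\le j-1\bigr)=F_{m,q}(j-1)$, the event $\{\covmax(T_{(j)})>g_{\delta'}(j-1)\}$ has probability at most $\delta'$. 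A union bound over these levels gives failure probability at most $(m-1)\delta'=\delta/2$, which proves the upper band; the mirror argument --- now $\epsilon\mapsto\Bar s(x,y)$ is non-decreasing and continuous, so $\intermetaset$ shrinks with $\epsilon$ and $\covminm$ is a non-increasing empirical c.d.f.\ --- proves the lower band. Intersecting the two band events and chaining with $\covmin\le\cov_h\le\covmax$ finishes the argument.

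\textbf{Main obstacle.}
The genuinely delicate point is the uniformity over \emph{all} $\epsilon>0$: a union bound over a fixed grid of budgets does not suffice, and the natural evaluation points in $\epsilon$-space --- the order statistics $T_{(k)}$ --- are themselves random, so they cannot be fixed in advance. The remedy is the two-stage collapse above: monotonicity of $\covmax$ reduces the uncountable family of bad events to failures at the $m$ order statistics, and the probability-integral transform converts ``$\covmax$ at a data-dependent point'' into ``a uniform order statistic against a fixed threshold'', at which point the Clopper--Pearson design of $\covmaxmp$ supplies the per-level probability $\delta'$. Establishing the continuity of $\epsilon\mapsto\underline s(x,y)$ (which is exactly what both parts of Assumption~\ref{assum:X-s} buy) and accounting carefully for the trivial/boundary levels so as to reach the factor $2m-2$ rather than $2m$ are the remaining bookkeeping tasks.
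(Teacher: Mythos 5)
Your high-level skeleton --- condition on $\CalDataset$, sandwich $\cov_h$ between $\covmin$ and $\covmax$, prove two one-sided uniform bands at level $\delta/2$ each by collapsing $\epsilon$ to finitely many points and union-bounding the pointwise Clopper--Pearson-type bound --- is the same as the paper's. But the proposal places the finitely many points at the \emph{random} order statistics $T_{(1)}\le\dots\le T_{(m)}$, whereas the paper uses $m-1$ \emph{deterministic} quantile points $\epsilon_k=\inf\{\epsilon:\covmax(\epsilon)\ge k/m\}$ of the true function, applies the pointwise bound there, and bridges the gaps by monotonicity at the price of an additive $1/m$ (this $\pm 1/m$ appears explicitly in Theorems~\ref{thm:appendix-thm-1} and~\ref{thm:appendix-thm-2}). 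Your choice creates two genuine gaps. First, the probability-integral-transform step fails as stated. Writing $F=\covmax$ for the c.d.f.\ of $T=\inf\{\epsilon:\sie(\Xt,\Yt)\le q_\alpha\}$, Assumption~\ref{assum:X-s} does \emph{not} prevent $F$ from having atoms at interior points of $(0,\infty)$: even though $\epsilon\mapsto\sie(x,y)$ is continuous for each fixed $(x,y)$ (your argument for that is fine), the law of $T$ can put positive mass at any $t_0$, e.g.\ if a positive-probability set of pairs has $\underline{s}_{t_0}(x,y)=q_\alpha$ while $\underline{s}_{\epsilon}(x,y)>q_\alpha$ for $\epsilon<t_0$. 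At such an atom $F(T_{(j)})$ is not dominated by the $j$-th uniform order statistic (in the extreme case $T\equiv t_0$ one gets $F(T_{(j)})=1$ a.s.), so your sufficient condition ``$\covmax(T_{(j)})\le g_{\delta'}(j-1)$'' can fail with probability far exceeding $\delta'$; your parenthetical restricting atoms to $0$ and $+\infty$ is unjustified, and ``$\#\{i:T_i\le T_{(j)}\}$ is $\mathrm{Binomial}(m,\covmax(T_{(j)}))$ marginally'' is not a meaningful statement (that count is deterministically $\ge j$). The step can be repaired by keeping the left limit, i.e.\ controlling $\{\covmax(T_{(j)}^-)>g_{\delta'}(j-1)\}$ via comparison with the deterministic threshold $s^*=\sup\{t:F(t^-)\le g_{\delta'}(j-1)\}$, for which $F(s^*)\ge g_{\delta'}(j-1)$ by right-continuity --- but this careful handling of discontinuity points is precisely the content of the paper's Proposition~\ref{prop:continuityGamma} and Steps 1--3 of its appendix proofs, not a detail one can wave away.

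Second, the count does not reach $\delta'=\delta/(2m-2)$. With your scheme the nontrivial levels are $k=0,1,\dots,m-1$: neither the band on $(0,T_{(1)})$ with bound $g_{\delta'}(0)=1-(\delta')^{1/m}$ nor the one on $[T_{(m-1)},T_{(m)})$ with bound $(1-\delta')^{1/m}$ is vacuous, so the union bound costs $m\delta'$ per side, i.e.\ $2m\delta'>\delta$. This is not ``remaining bookkeeping'': within your framework you would need $\delta'=\delta/(2m)$. The paper reaches the factor $2(m-1)$ precisely because its union bound runs over only $m-1$ deterministic points per side, paying instead with the additive $1/m$ slack. Finally, the ``mirror image'' for the lower band is not a pure swap: $1-\covmin(\epsilon)=\mathbb{P}(\bar T<\epsilon)$ involves a strict inequality (the set $\{\epsilon:\Bar{s}_\epsilon(x,y)>q_\alpha\}$ is open at its left endpoint), and $\covmin$ is only left-continuous; this asymmetry is why the paper's Theorem~\ref{thm:appendix-thm-2} needs the additional approximation $\epsilon_k^q\downarrow\epsilon_k$ and a limit in $q$, which your proposal omits.
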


The proof follows by combining $\covmin(\epsilon) \leq \cov_h(\epsilon) \leq \covmax(\epsilon)$ with Theorems~\ref{thm:appendix-thm-1} and~\ref{thm:appendix-thm-2} in Appendix~\ref{app:new-proof} (applied with the risk level $\delta/2$), followed by a union bound. These theorems are similar in spirit to the DKW inequality \citep{dvoretzky1956asymptotic,massart90tightConstantDKW} or variants \citep{vapnik1974theoryPatternRecognition,anthony1993resultVapnikapplications}, which are useful concentration inequalities to estimate an unknown cumulative distribution function based on i.i.d. samples. In particular, we borrow arguments from \citet[Theorem~1]{ducoffe2020high}, combined with a careful treatment of discontinuity points of $\covmin$ and $\covmax$.

\paragraph{Interpretation}
Our result provides a probabilistic guarantee for robustness under adversarial attacks of arbitrary budgets. Specifically, for \emph{any} calibration set $\CalDataset$ and \emph{any} attack function $h$ (regardless of the norm), the coverage under attack $\cov_h$ is bounded by $\covminmm$ and $\covmaxmp$ with probability $1-\delta$ over the randomness in the holdout set $\HoldoutDataset$, simultaneously for all perturbation budgets $\epsilon > 0$.
This simultaneity strengthens the practical utility of the bounds, as no prior assumptions about the adversary’s strategy are required.
Moreover, this result holds not only for the exact sets $\intermetaset$ and $\unionmetaset$, but also for any  $\intermetaset^\prime$ and $\unionmetaset^\prime$ verifying $\intermetaset^\prime \subseteq \intermetaset$ and $\unionmetaset \subseteq \unionmetaset^\prime$. This allows to use approximate sets from common robustness approaches, such as that of Section~\ref{sec:efficient_robust_sets}. Naturally, the informativeness of the bound is directly linked to the tightness of the estimate.

\section{Fast Computations for Robust and Vanilla CP with Lipschitz bounds}
\label{sec:efficient_robust_sets}

In this section, we introduce Lipschitz bounds for neural networks and use their validity across the totality of the support of $\inputspace$ to enable fast computations of conservative and restrictive scores. 

\subsection{On the Lipschitz Constant of Neural Networks}
\begin{definition}[Lipschitz constant]
    A neural network classifier $f: \inputspace \rightarrow \Reals^{c}$ is said to be $L$-Lipschitz in $\ell_p$ norm if it verifies the following behavior:

    \begin{equation}
        \forall(x,y) \in \inputspace^2, \| f(x) - f(y) \|_p \leq L.\| x - y \|_p
    \end{equation}
\end{definition}

Importantly, most deep neural networks -excluding attention based architectures~\cite{havensfine}- are Lipschitz continuous.

\paragraph{Estimating Lipschitz constants of neural networks} Computing the exact Lipschitz constant of deep neural networks is an NP-hard problem as stated in \cite{lipschitznphard}. In order to mitigate this issue, some methods like \cite{wang2024on} compute over-estimations of the Lipschitz constant of the network, for instance by computing the product of the Lipschitz constant of the network's layers. Unfortunately, these methods currently offer very loose estimations, any breakthroughs in that field would benefit our framework.

A more popular alternative to ad-hoc computation lies in the field of ``Lipschitz by design'' architectures. This very active field relies on the seminal work of \cite{sortingout} to provide efficient weight re-parametrizations that ensure 1-Lipschitz behavior in $\ell_2$ norm (in general). Some notable examples include Riemannian optimization inspired implementations \cite{lezcano2019trivializations}, or even performance-focused implementations with reduced computational overheads during training \cite{sll,boissin2025adaptiveorthogonalconvolutionscheme}. Using Lipschitz-constrained networks introduces the several advantages. First and foremost, Lipschitz-constrained networks allow \textit{explicit} control of their placement on the robustness-accuracy trade-off \cite{payattentionlossunderstanding}. Also, additional orthogonality constraints on these networks have been shown to result in generally tighter Lipschitz bounds. Finally, these constraints also mitigate the gradient vanishing problem \cite{li2019preventing} and allow for more efficient training. 

Therefore, Lipschitz-constrained networks have two main advantages: their training objective explicitly promotes robustness and the estimation of the Lipschitz constant is locally tight. These particular traits will be key to getting fast and tight estimations of the conservative and restrictive scores.

\subsection{Computing Conservative and Restrictive Conformal Scores}

\begin{figure}[!t]
    \centering
    \includegraphics[width=0.8 \linewidth]{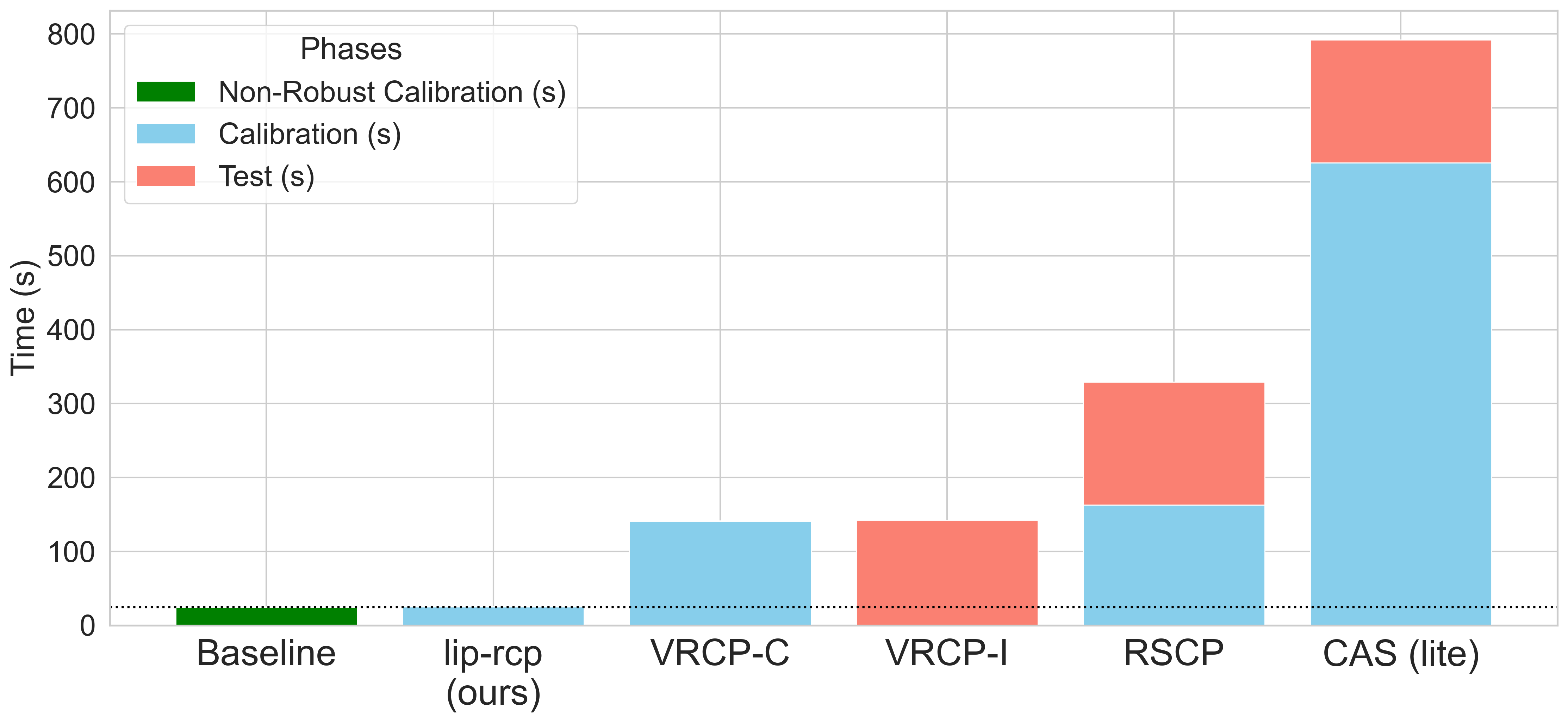}
    \caption{
    Calibration and test time
    on the \textsl{CIFAR-10} dataset for different robust CP methods. Our method has negligible overhead compared to vanilla CP.  
    Here we use $n_\mathrm{cal} = 4750, \ n_\mathrm{test}=4750$ and $n_{\mathrm{mc}} = 1024$, \emph{CAS (lite)} corresponds to the version of \emph{CAS} that leverages CDF-Aware smoothed Prediction Sets only at calibration time. 
    }
    \label{fig:speeeed}
\end{figure}

\begin{figure*}[t!]
    \centering
    \includegraphics[width=1.0\linewidth]{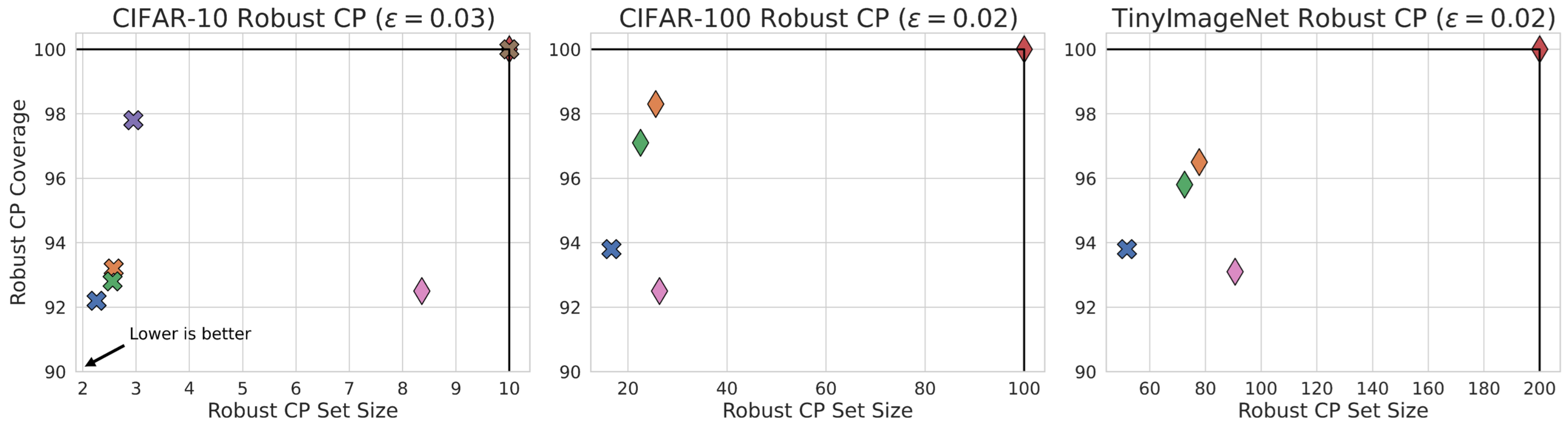}
    \includegraphics[width=0.65\linewidth]{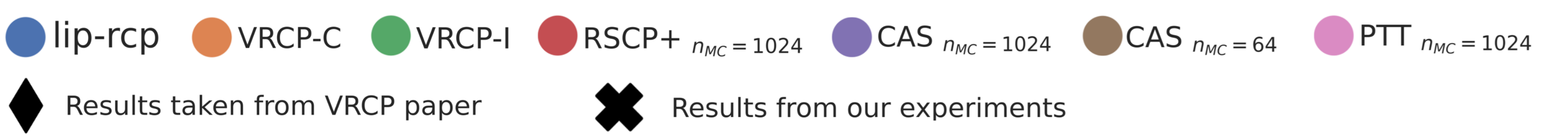}
    \caption{
        Robust CP set sizes and empirical coverage values across different classification datasets for $\alpha=0.1$. Ideally, the best robust CP sets are small while nearly achieving the desired coverage guarantee (bottom left corner)}. We plot values that were taken from the \emph{VRCP} paper with diamonds while our measurements are plotted as crosses.
    \label{fig:fastcp_results}
\end{figure*}

In this section, we use the global Lipschitz bound of our networks to efficiently estimate the conservative and restrictive prediction sets at model inference. We call our method \emph{lip-rcp}. These estimations are then used to construct robust CP sets and audit the robustness of vanilla CP as per \S~\ref{sec:vanillacoverage}. 
Our approach can be connected to the Lipschitz properties exhibited by the smoothed classifier introduced in \emph{RSCP}. 

However, at inference time, most Lipschitz weight re-parametrization schemes can be exported as a set of regular neural network weights to eliminate any computational overhead compared to their unconstrained counterparts.

\paragraph{Lipschitz score bounds} Assuming the non-conformity score $s(x,y)$ can be expressed as $\psi(f(x),y)$ such that $f$ is an $L_n$-Lipschitz classifier and $\psi(\cdot,y)$ is $L_s$-Lipschitz in $\ell_p$ norm for all $y \in \labelspace$, we can write: 
\begin{equation}
    \forall y \in \labelspace, | s(x, y) - s(x+\delta,y) | \leq L_n \cdot L_s \cdot \| \delta \|_p. 
\end{equation}
This induces the following bounds $\forall \Tilde{x} \in \bouleps(x), \forall y \in \labelspace$ : 
\begin{equation}
    \underbrace{s(x,y) - L_n \! \cdot \! L_s \! \cdot \! \epsilon}_{=\underline{s}_\emph{lip-rcp}(x,y)} \leq s(\Tilde{x},y) \leq \underbrace{s(x,y) + L_n \! \cdot \! L_s \! \cdot \! \epsilon}_{=\overline{s}_\emph{lip-rcp}(x,y)}
\label{eq:lipschitz_ineq}
\end{equation}
\paragraph{Score function} 
In the literature of CP, the Least Ambiguous Classifier (LAC) score, usually based on the softmax function, is classically used to provide conformal scores. 
One alternative we propose is to use a \textit{sigmoid} based conformal prediction score that we call the LAC sigmoid score:

\begin{equation}
    s(x,y) = 1 - \text{sigmoid}\left(\frac{f(x)_y - b}{T}\right).
\label{eq:sig_score}
\end{equation}

Importantly, the Lipschitz constant of this score function is $L_s = 1 / (4 \times T)$ w.r.t $f(x)_y$. 

Note that this score differs from the \emph{PTT} score of \cite{rscp} given that the value of $f(x)_y$ is directly passed through the sigmoid of temperature $T$ and bias $b$. No ranking transformation requiring an extra data split is applied. Furthermore, replacing $s(x,y)$ with $1-\text{softmax}(f(x)/T)_y$ would be similar in spirit. However, some experiments suggest that it might lead to larger prediction sets; see Appendix~\ref{app:scores}.

\paragraph{Unified calibration and test-time process} The certificates provided by \emph{VRCP} hold locally around data points $(X_i,Y_i)_{\CalDataset}$, and \emph{VRCP} distinguishes two possible robust CP procedures: robust calibration (\emph{VRCP-C}) or robust inference (\emph{VRCP-I}). However, using global Lipschitz bounds eliminates that need. Indeed, a robust calibration threshold $q_{\alpha,\epsilon}$ defined on scores $\underline{s}_\emph{lip-rcp}(x,y)$ of Eq.~\eqref{eq:lipschitz_ineq} 
yields the same results as robust inference given that the quantile computation is translation equivariant.

\paragraph{Efficient computation} To estimate the conservative and restrictive scores of Def.~\ref{def:union-inter-metaset}, we can simply leverage the expression of the bounds of Eq.~\ref{eq:lipschitz_ineq}. We detail the time complexities for the computation of a non-conformity by different certifiably robust CP methods in Table~\ref{tab:time_complexities}. In addition, we provide runtimes for the calibration and testing steps of different methods in Fig.~\ref{fig:speeeed}.

\begin{table}[h]
  \centering
  \begin{tabular}{lcc}
    \toprule
    Method         & Cal. complexity & Test complexity \\
    \midrule
    RSCP           & \(\mathcal{O}(n_{\mathrm{mc}})\)              & \(\mathcal{O}(n_{\mathrm{mc}})\)               \\
    CAS            & \(\mathcal{O}(n_{\mathrm{mc}})\)              & \(\mathcal{O}(n_{\mathrm{mc}}\times t_b)\)     \\
    VRCP-I         & \(\mathbfcal{O}(1)\)                             & \(\mathcal{O}(t_v)\)                            \\
    VRCP-C         & \(\mathcal{O}(t_v)\)                           & \(\mathbfcal{O}(1)\)                             \\
    lip-rcp (ours) & \(\mathbfcal{O}(1)\)                             & \(\mathbfcal{O}(1)\)                             \\
    \bottomrule
  \end{tabular}
  \caption{Time complexity for computing a single non-conformity score. Here \(n_{\mathrm{mc}}\) is the number of Monte Carlo samples, \(t_b\) the CAS‐bound cost, and \(t_v\) the VRCP solver cost. \emph{lip-rcp} is the only method with constant‐time calibration and inference.}
  \label{tab:time_complexities}
\end{table}

\section{Empirical Validation}
\label{sec:results}

To validate the performance of our method, we test our framework across multiple classification datasets. Our networks are composed of a 1-Lipschitz feature extractor followed by a classification layer that ensures that every output respects a $1$-Lipschitz condition. 
We provide information about of the computational overhead of training Lipschitz-constrained networks and the model architectures used for other methods in Appendix~\ref{app:models_experiments}. Finally, our code will be made available on the following \href{https://github.com/deel-ai-papers/lip-rcp}{github} repository.

\paragraph{Training scheme} We leverage networks with Lipschitz and orthogonality constraints from the library introduced in \citet{boissin2025adaptiveorthogonalconvolutionscheme} to balance both performance and minimal training overhead. More details are provided in Appendix~\ref{app:experimental_settings}. 

\begin{figure*}[!t]
    \centering
    \includegraphics[width=\linewidth]{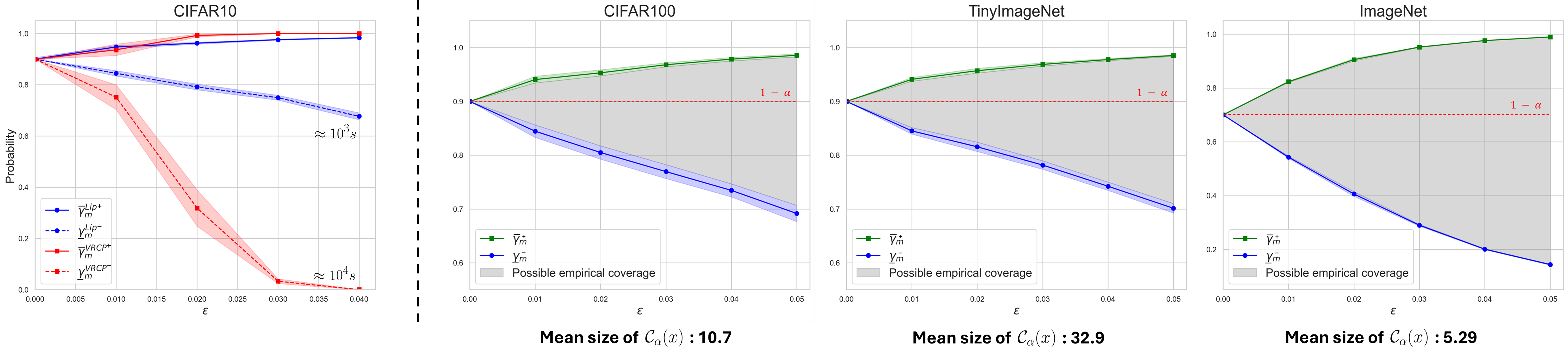}
    \caption{(Left): Comparison between vanilla CP coverage bounds using Lipschitz bounds or the \emph{CROWN} method with $\delta = 10^{-4}$. Approximate  computation times are also provided. (Right): Corrected $\covmaxmp$ and $\covminmm$ bounds for vanilla CP methods under $\epsilon$ bounded adversarial noise computed on 1-Lipschitz networks given $\delta = 10^{-4}$. Mean prediction set size on clean data is indicated below each plot. We take $n_{cal}=15000$ and $n_{eval}=35000$ for ImageNet.}
    \label{fig:coverage}
\end{figure*}
 
\subsection{Robust CP Comparison}
\label{sec:resultsclassic}

We evaluate our method on the \textsl{CIFAR-10}, \textsl{CIFAR-100} and \textsl{TinyImageNet} datasets. Our methodology follows that of the benchmark of \emph{VRCP} and we adopt the same calibration, holdout and test set sizes as \citet{vrcp} on all these datasets. Also, we give the mean values of the robust CP set sizes and the conformal coverage of these robust sets across $25$ different random samplings of $\CalDataset$ and $\TestDataset$ (as well as $\mathcal{D}_\mathrm{holdout}$ for \emph{PTT}) that were unseen during training. We report our results in Fig.~\ref{fig:fastcp_results} where we also plot the measurements made by \citet{vrcp} with different markers. To allow fair comparison with verification methods, we train specific neural networks for \emph{VRCP} on the \textsl{CIFAR-10} dataset according to Appendix B of the associated paper. This allows the \emph{CROWN}~\cite{zhang2018efficientneuralnetworkrobustness} method to run efficiently on this model. For \emph{CAS}, we use ResNet50 networks. Finally, we do not benchmark the \emph{PRCP} method whose robustness guarantees are specific to the type of adversarial perturbation.

\paragraph{Interpretation} Our method outperforms existing robust CP approaches across all tested datasets, Fig.~\ref{fig:fastcp_results}, it provides smaller robust CP sets while maintaining conformal coverage close to $1-\alpha$. We give the following explanations to these results. First, the Lipschitz-constrained model training approach we use is more efficient at promoting robustness. Also, neural networks with orthogonality constraints allow tight estimations of their conservative scores. Additionally, smoothing methods suffer from finite sample MC estimations in more than one way: as both a corrective factor is added to the smoothed scores, and the error probability of the MC estimation also necessitates adjusting the quantile computation. Finally, smoothing methods require an unrealistic number of MC samplings to obtain meaningful sets.

\paragraph{Scaling to the ImageNet dataset} In order to evaluate the scalability of our method, we also validate it on the \textsl{ImageNet} dataset. Given the poor scalability of formal verification methods, we were not able to apply \emph{VRCP} on the \textsl{ImageNet} dataset. 
We therefore compare our method to \emph{CAS} on a ResNet50 network, being generally the best performing competing approach. 
 Our model is described in Appendix~\ref{app:model_implementation}. 

Finally, we used a restricted set of $n=500$ data points to evaluate the \emph{CAS} method as done in \citet{cas}, given its intensive computational budget.  
For both methods we use $40\%$ of the data points for calibration and the rest for testing. The obtained results are given in Table~\ref{tab:imagenet}. 

\begin{table}[h]
  \centering
  \sisetup{
    detect-weight = true,
    detect-family = true,
    table-number-alignment = center
  }
  \begin{tabular}{
    l
    S[table-format=4.1]
    S[table-format=3.1]
    l
    S[table-format=1.3]
  }
    \toprule
    \textbf{Method}   & {\textbf{Set size}} & {\textbf{Cov. (\%)}} & {\textbf{\(n\)}} & {\textbf{Time (s)}} \\
    \midrule
    \emph{CAS}        & 1000.0               & 100.0                    & $5\cdot10^2$     & 7.920                        \\
    \emph{lip-rcp}    & \bfseries 111.0      & \bfseries 97.4           & $5\cdot10^4$     & 0.012                        \\
    \bottomrule
  \end{tabular}
  \caption{Robust CP set sizes and empirical test coverage at \(\epsilon=0.02\), \(\alpha=0.1\) on ImageNet (using \emph{CAS} with \(n_{\mathrm{mc}}=1024\)). One additional run of our method at \(n=5\cdot10^2\) yielded a set size of 118.5 and 97.5\% coverage. The time is given \textit{per sample}.}
  \label{tab:imagenet}
\end{table}

\subsection{Certifiable Vanilla CP Coverage Bounds Results}
\label{sec:resultscoverage}

In this section, we compute respectively lower and upper bounds of \(\covminmm\) and \(\covmaxmp\) introduced in Section~\ref{sec:vanillacoverage} by both Lipschitz and formal verification methods.
We first perform vanilla split CP on \(\CalDataset\) consisting of \(n_{cal} = 3000\) samples. Next, we compute empirical approximations \(\covmaxm\) and \(\covminm\) as in \eqref{eq:covmaxm} and \eqref{eq:covminm} on an evaluation dataset \(\HoldoutDataset\) with \(n_{eval} = 5000\) samples with both Lipschitz bounds and the \emph{CROWN} formal verification method.
We then compute the corrected estimates  \(\covmaxmp\) and \(\covminmm\) as in \eqref{eq:covmaxmp} and \eqref{eq:covminmm} guaranteed in Theorem~\ref{thm:robustcoverageuniform} by binary search.
We repeat this process over 20 randomly sampled, non-overlapping pairs of \(\CalDataset\) and \(\HoldoutDataset\), and plot the mean value with an uncertainty band of 1 standard deviation.
The resulting coverage metrics are shown on Fig.~\ref{fig:coverage} (left) on the \textsl{CIFAR-10} dataset for both computation methods with their associated models.

We further compute the worst-case conformal coverage values of standard conformal prediction (CP) sets under \(\epsilon\)-bounded perturbations across all previously mentioned datasets. Due to the larger scale of these tasks, we are only able to compute the bounds for $1$-Lipschitz models. The results are presented in Fig.~\ref{fig:coverage} (right).

\paragraph{Interpretation} As illustrated in Fig.~\ref{fig:coverage} (left), Lipschitz-constrained models yield less pessimistic bounds than verification for the worst-case coverage of vanilla CP under \(\epsilon\)-bounded adversarial perturbations. This can be explained by the limited scalability of formal verification methods to larger neural networks. Also, Fig~\ref{fig:coverage} (right) uncovers how providing coverage bounds for CP methods applied to robust networks conserves the small and informative nature of CP sets while providing error guarantees under bounded attacks.
\section{Discussion}

\paragraph{Conclusion}
In this work, we propose a novel method for fast computation of certifiably robust prediction sets, improving over SoTA methods both in terms of speed and conformal set sizes.
First, through a careful analysis of vanilla CP under attack, we provide novel high-probability bounds on its coverage under attack. 
Our novel guarantees are valid simultaneously for all attack budgets $\epsilon$ and function $h(\cdot, \epsilon)$, therefore not requiring assumptions on the attacker. 
Then, we use Lipschitz-bounded networks to estimate robust prediction sets, benefiting from three main advantages: better scalability, better overall robustness and the ability to tightly estimate worst-case variations with little computational overhead. We apply our \emph{lip-rcp} approach not only to compute efficiently bounds for vanilla CP under attack, but also for robust CP. Finally we validate our approaches on multiple classification datasets achieving best-in-class performance with similar computational requirements as vanilla CP.

\paragraph{Limitations and future work}
The method and results presented in this paper rely on some conditions on the model and perturbations, which would be worth generalizing.

Our approach delivers strong certifiable guarantees under $\ell_2$ perturbations but does not generalize to other norms, such as $\ell_1$ or $\ell_\infty$. Integrating advances from works like \citet{l_1lipschitz} or \citet{zhang2021certifyinglinfinityrobustnessusing} could broaden its applicability and is left for future work.
Moreover, while our method is applicable to any network whose Lipschitz constant is computable, we limit our study to Lipschitz-by-design architectures---whose Lipschitz constant is tightly controlled---to enable efficient robust conformal prediction. Note that this approach is not fully model-agnostic as other competing conformal methods, but that
tight Lipschitz estimation would restore model agnosticism and pave the way for efficient robust conformal prediction in classification \textit{and regression} settings.

Finally, it would be interesting to investigate whether our approach generalizes to higher values of $\epsilon$ as, e.g., in the works of \citet{rscp, cas}, while retaining our guarantees and computational efficiency.
\section*{Impact Statement}

This paper presents work whose goal is to advance the field of Machine Learning. There are many potential societal consequences of our work, none which we feel must be specifically highlighted here.

\section*{Acknowledgements}

The authors would like to thank Agustin Martin Picard for his insights, along with Arthur Chiron and Luca Mossina for their careful proofreading.

This work was carried out within the DEEL project,\footnote{\url{https://www.deel.ai/}} which is part of IRT Saint Exupéry and the ANITI AI cluster. The authors acknowledge the financial support from DEEL's Industrial and Academic Members and the France 2030 program – Grant agreements n°ANR-10-AIRT-01 and n°ANR-23-IACL-0002.

\bibliography{biblio}

\begin{thebibliography}{58}
\providecommand{\natexlab}[1]{#1}
\providecommand{\url}[1]{\texttt{#1}}
\expandafter\ifx\csname urlstyle\endcsname\relax
  \providecommand{\doi}[1]{doi: #1}\else
  \providecommand{\doi}{doi: \begingroup \urlstyle{rm}\Url}\fi

\bibitem[And{\'e}ol et~al.(2023)And{\'e}ol, Fel, De~Grancey, and Mossina]{andeol2023confident}
And{\'e}ol, L., Fel, T., De~Grancey, F., and Mossina, L.
\newblock Confident object detection via conformal prediction and conformal risk control: an application to railway signaling.
\newblock In \emph{Conformal and Probabilistic Prediction with Applications}, pp.\  36--55. PMLR, 2023.

\bibitem[Angelopoulos \& Bates(2023)Angelopoulos and Bates]{gentleintroductionconformalprediction}
Angelopoulos, A.~N. and Bates, S.
\newblock Conformal prediction: A gentle introduction.
\newblock \emph{Found. Trends Mach. Learn.}, pp.\  494–591, March 2023.

\bibitem[Anil et~al.(2019)Anil, Lucas, and Grosse]{sortingout}
Anil, C., Lucas, J., and Grosse, R.
\newblock Sorting out lipschitz function approximation.
\newblock In \emph{International Conference on Machine Learning}, pp.\  291--301. PMLR, 2019.

\bibitem[Anthony \& Shawe-Taylor(1993)Anthony and Shawe-Taylor]{anthony1993resultVapnikapplications}
Anthony, M. and Shawe-Taylor, J.
\newblock A result of {V}apnik with applications.
\newblock \emph{Discrete Applied Mathematics}, 47\penalty0 (3):\penalty0 207--217, 1993.
\newblock ISSN 0166-218X.

\bibitem[Aolaritei et~al.(2025)Aolaritei, Jordan, Marzouk, Wang, and Zhu]{aolaritei2025conformal}
Aolaritei, L., Jordan, M.~I., Marzouk, Y., Wang, Z.~O., and Zhu, J.
\newblock Conformal prediction under l$\backslash$'evy-prokhorov distribution shifts: Robustness to local and global perturbations.
\newblock \emph{arXiv preprint arXiv:2502.14105}, 2025.

\bibitem[Araujo et~al.(2023)Araujo, Havens, Delattre, Allauzen, and Hu]{sll}
Araujo, A., Havens, A.~J., Delattre, B., Allauzen, A., and Hu, B.
\newblock A unified algebraic perspective on lipschitz neural networks.
\newblock In \emph{The Eleventh International Conference on Learning Representations}, 2023.

\bibitem[Athalye et~al.(2018)Athalye, Carlini, and Wagner]{athalye2018obfuscatedgradientsfalsesense}
Athalye, A., Carlini, N., and Wagner, D.
\newblock Obfuscated gradients give a false sense of security: Circumventing defenses to adversarial examples.
\newblock In \emph{International conference on machine learning}, pp.\  274--283. PMLR, 2018.

\bibitem[B{\'e}thune et~al.(2022)B{\'e}thune, Boissin, Serrurier, Mamalet, Friedrich, and Gonzalez~Sanz]{payattentionlossunderstanding}
B{\'e}thune, L., Boissin, T., Serrurier, M., Mamalet, F., Friedrich, C., and Gonzalez~Sanz, A.
\newblock Pay attention to your loss: understanding misconceptions about lipschitz neural networks.
\newblock \emph{Advances in Neural Information Processing Systems}, 35:\penalty0 20077--20091, 2022.

\bibitem[B{\'e}thune et~al.(2024)B{\'e}thune, Mass{\'e}na, Boissin, Friedrich, Mamalet, Bellet, Serrurier, and Vigouroux]{lipdp}
B{\'e}thune, L., Mass{\'e}na, T., Boissin, T., Friedrich, C., Mamalet, F., Bellet, A., Serrurier, M., and Vigouroux, D.
\newblock Dp-sgd without clipping: The lipschitz neural network way.
\newblock In \emph{International Conference on Learning Representations (ICLR)}, 2024.

\bibitem[Biswas(2024)]{l_1lipschitz}
Biswas, A.
\newblock Hidden synergy: $l_1$ weight normalization and 1-path-norm regularization.
\newblock \emph{arXiv preprint arXiv:2404.19112}, 2024.

\bibitem[Boissin et~al.(2025)Boissin, Mamalet, Fel, Picard, Massena, and Serrurier]{boissin2025adaptiveorthogonalconvolutionscheme}
Boissin, T., Mamalet, F., Fel, T., Picard, A.~M., Massena, T., and Serrurier, M.
\newblock An adaptive orthogonal convolution scheme for efficient and flexible cnn architectures, 2025.
\newblock URL \url{https://arxiv.org/abs/2501.07930}.

\bibitem[Brunekreef et~al.(2024)Brunekreef, Marcus, Sheombarsing, Sonke, and Teuwen]{brunekreef2024kandinsky}
Brunekreef, J., Marcus, E., Sheombarsing, R., Sonke, J.-J., and Teuwen, J.
\newblock Kandinsky conformal prediction: Efficient calibration of image segmentation algorithms.
\newblock In \emph{Proceedings of the IEEE/CVF Conference on Computer Vision and Pattern Recognition}, pp.\  4135--4143, 2024.

\bibitem[Carlini \& Wagner(2017)Carlini and Wagner]{carlini2017evaluatingrobustnessneuralnetworks}
Carlini, N. and Wagner, D.
\newblock Towards evaluating the robustness of neural networks.
\newblock In \emph{2017 ieee symposium on security and privacy (sp)}, pp.\  39--57. Ieee, 2017.

\bibitem[Cohen et~al.(2019)Cohen, Rosenfeld, and Kolter]{cohen2019certifiedadversarialrobustnessrandomized}
Cohen, J., Rosenfeld, E., and Kolter, Z.
\newblock Certified adversarial robustness via randomized smoothing.
\newblock In \emph{international conference on machine learning}, pp.\  1310--1320. PMLR, 2019.

\bibitem[Corsi et~al.(2020)Corsi, Marchesini, Farinelli, and Fiorini]{formalverificationtrajectory}
Corsi, D., Marchesini, E., Farinelli, A., and Fiorini, P.
\newblock Formal verification for safe deep reinforcement learning in trajectory generation.
\newblock In \emph{2020 Fourth IEEE International Conference on Robotic Computing (IRC)}, pp.\  352--359, 2020.
\newblock \doi{10.1109/IRC.2020.00062}.

\bibitem[Delseny et~al.(2021)Delseny, Gabreau, Gauffriau, Beaudouin, Ponsolle, Alecu, Bonnin, Beltran, Duchel, Ginestet, Hervieu, Martinez, Pasquet, Delmas, Pagetti, Gabriel, Chapdelaine, Picard, Damour, Cappi, Gardès, Grancey, Jenn, Lefevre, Flandin, Gerchinovitz, Mamalet, and Albore]{delseny2021whitepapermachinelearning}
Delseny, H., Gabreau, C., Gauffriau, A., Beaudouin, B., Ponsolle, L., Alecu, L., Bonnin, H., Beltran, B., Duchel, D., Ginestet, J.-B., Hervieu, A., Martinez, G., Pasquet, S., Delmas, K., Pagetti, C., Gabriel, J.-M., Chapdelaine, C., Picard, S., Damour, M., Cappi, C., Gardès, L., Grancey, F.~D., Jenn, E., Lefevre, B., Flandin, G., Gerchinovitz, S., Mamalet, F., and Albore, A.
\newblock White paper machine learning in certified systems, 2021.
\newblock URL \url{https://arxiv.org/abs/2103.10529}.

\bibitem[Ding et~al.(2024)Ding, Angelopoulos, Bates, Jordan, and Tibshirani]{ding2024class}
Ding, T., Angelopoulos, A., Bates, S., Jordan, M., and Tibshirani, R.~J.
\newblock Class-conditional conformal prediction with many classes.
\newblock \emph{Advances in Neural Information Processing Systems}, 36, 2024.

\bibitem[Ducoffe et~al.(2020)Ducoffe, Gerchinovitz, and Gupta]{ducoffe2020high}
Ducoffe, M., Gerchinovitz, S., and Gupta, J.~S.
\newblock A high probability safety guarantee for shifted neural network surrogates.
\newblock In \emph{SafeAI@ AAAI}, pp.\  74--82, 2020.

\bibitem[Dvoretzky et~al.(1956)Dvoretzky, Kiefer, Wolfowitz, et~al.]{dvoretzky1956asymptotic}
Dvoretzky, A., Kiefer, J., Wolfowitz, J., et~al.
\newblock Asymptotic minimax character of the sample distribution function and of the classical multinomial estimator.
\newblock \emph{The Annals of Mathematical Statistics}, 27\penalty0 (3):\penalty0 642--669, 1956.

\bibitem[Fel et~al.(2023)Fel, Ducoffe, Vigouroux, Cad\`ene, Capelle, Nicod\`eme, and Serre]{Fel_2023_CVPR}
Fel, T., Ducoffe, M., Vigouroux, D., Cad\`ene, R., Capelle, M., Nicod\`eme, C., and Serre, T.
\newblock Don't lie to me! robust and efficient explainability with verified perturbation analysis.
\newblock In \emph{Proceedings of the IEEE/CVF Conference on Computer Vision and Pattern Recognition (CVPR)}, pp.\  16153--16163, June 2023.

\bibitem[Gendler et~al.(2022)Gendler, Weng, Daniel, and Romano]{rscp}
Gendler, A., Weng, T.-W., Daniel, L., and Romano, Y.
\newblock Adversarially robust conformal prediction.
\newblock In \emph{International Conference on Learning Representations}, 2022.

\bibitem[Ghosh et~al.(2023)Ghosh, Shi, Belkhouja, Yan, Doppa, and Jones]{prcp}
Ghosh, S., Shi, Y., Belkhouja, T., Yan, Y., Doppa, J., and Jones, B.
\newblock Probabilistically robust conformal prediction.
\newblock In \emph{Uncertainty in Artificial Intelligence}, pp.\  681--690. PMLR, 2023.

\bibitem[Goodfellow et~al.(2015)Goodfellow, Shlens, and Szegedy]{goodfellow2015explainingharnessingadversarialexamples}
Goodfellow, I., Shlens, J., and Szegedy, C.
\newblock Explaining and harnessing adversarial examples.
\newblock In \emph{International Conference on Learning Representations}, 2015.

\bibitem[Havens et~al.(2024)Havens, Araujo, Zhang, and Hu]{havensfine}
Havens, A.~J., Araujo, A., Zhang, H., and Hu, B.
\newblock Fine-grained local sensitivity analysis of standard dot-product self-attention.
\newblock In \emph{Forty-first International Conference on Machine Learning}, 2024.

\bibitem[Jeary et~al.(2024)Jeary, Kuipers, Hosseini, and Paoletti]{vrcp}
Jeary, L., Kuipers, T., Hosseini, M., and Paoletti, N.
\newblock Verifiably robust conformal prediction.
\newblock In \emph{The Thirty-eighth Annual Conference on Neural Information Processing Systems}, 2024.

\bibitem[Katz et~al.(2017)Katz, Barrett, Dill, Julian, and Kochenderfer]{katz2017reluplexefficientsmtsolver}
Katz, G., Barrett, C., Dill, D.~L., Julian, K., and Kochenderfer, M.~J.
\newblock Reluplex: An efficient smt solver for verifying deep neural networks.
\newblock In \emph{Computer Aided Verification: 29th International Conference, CAV 2017, Heidelberg, Germany, July 24-28, 2017, Proceedings, Part I 30}, pp.\  97--117. Springer, 2017.

\bibitem[Langford \& Schapire(2005)Langford and Schapire]{langford2005tutorial}
Langford, J. and Schapire, R.
\newblock Tutorial on practical prediction theory for classification.
\newblock \emph{Journal of machine learning research}, 6\penalty0 (3), 2005.

\bibitem[Ledda et~al.(2023)Ledda, Angioni, Piras, Fumera, Biggio, and Roli]{ledda2023adversarialattacksuncertaintyquantification}
Ledda, E., Angioni, D., Piras, G., Fumera, G., Biggio, B., and Roli, F.
\newblock Adversarial attacks against uncertainty quantification.
\newblock In \emph{Proceedings of the IEEE/CVF International Conference on Computer Vision}, pp.\  4599--4608, 2023.

\bibitem[Leino et~al.(2021)Leino, Wang, and Fredrikson]{gloro}
Leino, K., Wang, Z., and Fredrikson, M.
\newblock Globally-robust neural networks.
\newblock In \emph{International Conference on Machine Learning}, pp.\  6212--6222. PMLR, 2021.

\bibitem[Lezcano~Casado(2019)]{lezcano2019trivializations}
Lezcano~Casado, M.
\newblock Trivializations for gradient-based optimization on manifolds.
\newblock \emph{Advances in Neural Information Processing Systems}, 32, 2019.

\bibitem[Li et~al.(2019)Li, Haque, Anil, Lucas, Grosse, and Jacobsen]{li2019preventing}
Li, Q., Haque, S., Anil, C., Lucas, J., Grosse, R.~B., and Jacobsen, J.-H.
\newblock Preventing gradient attenuation in lipschitz constrained convolutional networks.
\newblock \emph{Advances in neural information processing systems}, 32, 2019.

\bibitem[Li et~al.(2024)Li, Chen, Qian, Zhao, Lidder, and Huai]{li2024data}
Li, Y., Chen, A., Qian, W., Zhao, C., Lidder, D., and Huai, M.
\newblock Data poisoning attacks against conformal prediction.
\newblock In \emph{Forty-first International Conference on Machine Learning}, 2024.

\bibitem[Liu et~al.(2024)Liu, Yufei, Yan, Xu, Ji, Liu, and Chan]{liu2024pitfallspromiseconformalinference}
Liu, Z., Yufei, C., Yan, Y., Xu, Y., Ji, X., Liu, X., and Chan, A.~B.
\newblock The pitfalls and promise of conformal inference under adversarial attacks.
\newblock In \emph{Forty-first International Conference on Machine Learning}, 2024.

\bibitem[Massart(1990)]{massart90tightConstantDKW}
Massart, P.
\newblock The tight constant in the {D}voretzky-{K}iefer-{W}olfowitz inequality.
\newblock \emph{Ann. Probab.}, 18\penalty0 (3):\penalty0 1269--1283, 07 1990.

\bibitem[Meunier et~al.(2022)Meunier, Delattre, Araujo, and Allauzen]{cpl}
Meunier, L., Delattre, B.~J., Araujo, A., and Allauzen, A.
\newblock A dynamical system perspective for lipschitz neural networks.
\newblock In \emph{International Conference on Machine Learning}, pp.\  15484--15500. PMLR, 2022.

\bibitem[Mohri \& Hashimoto(2024)Mohri and Hashimoto]{mohri2024language}
Mohri, C. and Hashimoto, T.
\newblock Language models with conformal factuality guarantees.
\newblock In \emph{Forty-first International Conference on Machine Learning}, 2024.

\bibitem[Mossina et~al.(2024)Mossina, Dalmau, and And{\'e}ol]{mossina2024conformal}
Mossina, L., Dalmau, J., and And{\'e}ol, L.
\newblock Conformal semantic image segmentation: Post-hoc quantification of predictive uncertainty.
\newblock In \emph{Proceedings of the IEEE/CVF Conference on Computer Vision and Pattern Recognition}, pp.\  3574--3584, 2024.

\bibitem[Papadopoulos et~al.(2002)Papadopoulos, Proedrou, Vovk, and Gammerman]{papadopoulos2002inductive}
Papadopoulos, H., Proedrou, K., Vovk, V., and Gammerman, A.
\newblock Inductive confidence machines for regression.
\newblock In \emph{Machine learning: ECML 2002: 13th European conference on machine learning Helsinki, Finland, August 19--23, 2002 proceedings 13}, pp.\  345--356. Springer, 2002.

\bibitem[Papadopoulos et~al.(2011)Papadopoulos, Vovk, and Gammerman]{papadopoulos2011regression}
Papadopoulos, H., Vovk, V., and Gammerman, A.
\newblock Regression conformal prediction with nearest neighbours.
\newblock \emph{Journal of Artificial Intelligence Research}, 40:\penalty0 815--840, 2011.

\bibitem[Romano et~al.(2019)Romano, Patterson, and Candes]{romano2019conformalized}
Romano, Y., Patterson, E., and Candes, E.
\newblock Conformalized quantile regression.
\newblock \emph{Advances in neural information processing systems}, 32, 2019.

\bibitem[Russo \& Proutiere(2019)Russo and Proutiere]{liprl}
Russo, A. and Proutiere, A.
\newblock Optimal attacks on reinforcement learning policies, 2019.
\newblock URL \url{https://arxiv.org/abs/1907.13548}.

\bibitem[Sadinle et~al.(2019)Sadinle, Lei, and Wasserman]{sadinle2019least}
Sadinle, M., Lei, J., and Wasserman, L.
\newblock Least ambiguous set-valued classifiers with bounded error levels.
\newblock \emph{Journal of the American Statistical Association}, 114\penalty0 (525):\penalty0 223--234, 2019.

\bibitem[Salman et~al.(2019)Salman, Li, Razenshteyn, Zhang, Zhang, Bubeck, and Yang]{salman2020provablyrobustdeeplearning}
Salman, H., Li, J., Razenshteyn, I., Zhang, P., Zhang, H., Bubeck, S., and Yang, G.
\newblock Provably robust deep learning via adversarially trained smoothed classifiers.
\newblock \emph{Advances in neural information processing systems}, 32, 2019.

\bibitem[Serrurier et~al.(2021)Serrurier, Mamalet, Gonz{\'a}lez-Sanz, Boissin, Loubes, and Del~Barrio]{acheivingrobustness}
Serrurier, M., Mamalet, F., Gonz{\'a}lez-Sanz, A., Boissin, T., Loubes, J.-M., and Del~Barrio, E.
\newblock Achieving robustness in classification using optimal transport with hinge regularization.
\newblock In \emph{Proceedings of the IEEE/CVF Conference on Computer Vision and Pattern Recognition}, pp.\  505--514, 2021.

\bibitem[Serrurier et~al.(2024)Serrurier, Mamalet, Fel, B{\'e}thune, and Boissin]{serrurier2024explainableproperties1lipschitzneural}
Serrurier, M., Mamalet, F., Fel, T., B{\'e}thune, L., and Boissin, T.
\newblock On the explainable properties of 1-lipschitz neural networks: An optimal transport perspective.
\newblock \emph{Advances in Neural Information Processing Systems}, 36, 2024.

\bibitem[Szegedy et~al.(2014)Szegedy, Zaremba, Sutskever, Bruna, Erhan, Goodfellow, and Fergus]{szegedy2014intriguingpropertiesneuralnetworks}
Szegedy, C., Zaremba, W., Sutskever, I., Bruna, J., Erhan, D., Goodfellow, I., and Fergus, R.
\newblock Intriguing properties of neural networks.
\newblock In \emph{2nd International Conference on Learning Representations, ICLR 2014}, 2014.

\bibitem[Teneggi et~al.(2023)Teneggi, Tivnan, Stayman, and Sulam]{teneggi2023trust}
Teneggi, J., Tivnan, M., Stayman, W., and Sulam, J.
\newblock How to trust your diffusion model: A convex optimization approach to conformal risk control.
\newblock In \emph{International Conference on Machine Learning}, pp.\  33940--33960. PMLR, 2023.

\bibitem[Timans et~al.(2025)Timans, Straehle, Sakmann, and Nalisnick]{timans2024adaptive}
Timans, A., Straehle, C.-N., Sakmann, K., and Nalisnick, E.
\newblock Adaptive bounding box uncertainties via two-step conformal prediction.
\newblock In \emph{European Conference on Computer Vision}, pp.\  363--398. Springer, 2025.

\bibitem[Vapnik \& Chervonenkis(1974)Vapnik and Chervonenkis]{vapnik1974theoryPatternRecognition}
Vapnik, V.~N. and Chervonenkis, A.~Y.
\newblock \emph{Theory of Pattern Recognition}.
\newblock Nauka, Moscow (in Russian), 1974.
\newblock German translation: Theorie der Zeichenerkennung, Akademie Verlag, Berlin, 1979.

\bibitem[Virmaux \& Scaman(2018)Virmaux and Scaman]{lipschitznphard}
Virmaux, A. and Scaman, K.
\newblock Lipschitz regularity of deep neural networks: analysis and efficient estimation.
\newblock In Bengio, S., Wallach, H., Larochelle, H., Grauman, K., Cesa-Bianchi, N., and Garnett, R. (eds.), \emph{Advances in Neural Information Processing Systems}, volume~31. Curran Associates, Inc., 2018.

\bibitem[Vovk et~al.(2005)Vovk, Gammerman, and Shafer]{vovk}
Vovk, V., Gammerman, A., and Shafer, G.
\newblock \emph{Algorithmic Learning in a Random World}.
\newblock 01 2005.
\newblock \doi{10.1007/b106715}.

\bibitem[Wang et~al.(2024)Wang, Havens, Araujo, Zheng, Hu, Chen, and Jha]{wang2024on}
Wang, Z., Havens, A.~J., Araujo, A., Zheng, Y., Hu, B., Chen, Y., and Jha, S.
\newblock On the scalability and memory efficiency of semidefinite programs for lipschitz constant estimation of neural networks.
\newblock In \emph{The Twelfth International Conference on Learning Representations}, 2024.

\bibitem[Wu et~al.(2024)Wu, Ghomi, Glukhov, Cresswell, Boenisch, and Papernot]{wu2024augmentsmoothreconcilingdifferential}
Wu, J., Ghomi, A.~A., Glukhov, D., Cresswell, J.~C., Boenisch, F., and Papernot, N.
\newblock Augment then smooth: Reconciling differential privacy with certified robustness.
\newblock \emph{Transactions on Machine Learning Research}, 2024.
\newblock ISSN 2835-8856.

\bibitem[Yan et~al.(2024)Yan, Romano, and Weng]{rscp_plus}
Yan, G., Romano, Y., and Weng, T.-W.
\newblock Provably robust conformal prediction with improved efficiency.
\newblock In \emph{The Twelfth International Conference on Learning Representations}, 2024.

\bibitem[Zargarbashi \& Bojchevski(2025)Zargarbashi and Bojchevski]{bincp}
Zargarbashi, S.~H. and Bojchevski, A.
\newblock Robust conformal prediction with a single binary certificate.
\newblock In \emph{The Thirteenth International Conference on Learning Representations}, 2025.

\bibitem[Zargarbashi et~al.(2024)Zargarbashi, Akhondzadeh, and Bojchevski]{cas}
Zargarbashi, S.~H., Akhondzadeh, M.~S., and Bojchevski, A.
\newblock Robust yet efficient conformal prediction sets.
\newblock In \emph{Proceedings of the 41st International Conference on Machine Learning}, pp.\  17123--17147, 2024.

\bibitem[Zhang et~al.(2021)Zhang, Cai, Lu, He, and Wang]{zhang2021certifyinglinfinityrobustnessusing}
Zhang, B., Cai, T., Lu, Z., He, D., and Wang, L.
\newblock Towards certifying l-infinity robustness using neural networks with l-inf-dist neurons.
\newblock In \emph{International Conference on Machine Learning}, pp.\  12368--12379. PMLR, 2021.

\bibitem[Zhang et~al.(2018)Zhang, Weng, Chen, Hsieh, and Daniel]{zhang2018efficientneuralnetworkrobustness}
Zhang, H., Weng, T.-W., Chen, P.-Y., Hsieh, C.-J., and Daniel, L.
\newblock Efficient neural network robustness certification with general activation functions.
\newblock \emph{Advances in neural information processing systems}, 31, 2018.

\end{thebibliography}
\bibliographystyle{icml2025}

\appendix
\onecolumn

\clearpage
\setcounter{page}{1}

\begin{table*}[h]
  \centering
  \sisetup{table-format=1.0} 
  \begin{tabularx}{\textwidth}{
      c     
      c                 
      c                 
      c                 
      c                 
      c                 
      S[table-format=1.0] 
    }
    \toprule
    Method & Certifiable & Fast cal. & Fast test & \(\ell_1,\,\ell_\infty\) & Poisoning & {Nb hyperparams} \\
    \midrule
    \emph{aPRCP}~\cite{prcp}               & \impossible & \impossible & \possible   & \impossible & \impossible & 2 \\
    \emph{RSCP}~\cite{rscp}               & \impossible & \impossible & \impossible & \possible   & \impossible & 3 \\
    \emph{RSCP}+ (PTT/RCT)~\cite{rscp_plus}
                                   & \possible   & \impossible & \impossible & \possible   & \impossible & 4 (6) \\
    \emph{CAS}~\cite{cas}                 & \possible   & \impossible & \impossible & \possible   & \possible   & 2 \\
    \emph{VRCP-I}~\cite{vrcp}             & \possible   & \possible   & \impossible & \possible   & \impossible & 2 \\
    \emph{VRCP-C}~\cite{vrcp}             & \possible   & \impossible & \possible   & \possible   & \impossible & 2 \\
    \emph{lip-rcp}                           & \possible   & \possible   & \possible   & \impossible & \possible   & 2 \\
    \bottomrule
  \end{tabularx}
  \caption{Comparison of certified robustness methods. Note that aPRCP is not certifiable without assumptions on the adversarial distribution. See Section~\ref{par:hyperparams} for details on hyperparameter counts.}
  \label{tab:huge_table}
\end{table*}

\section{On the Lower Bound of \citet[Theorem~2]{rscp}}
\label{app:GendlerMistake}

Unfortunately, it seems that a subtle mathematical mistake was left aside in Theorem~2 by \citet{rscp}. It can be seen in the statement and in the proof.

In the statement, the deterministic quantity $\mathbb{P}\bigl[Y_{n+1} \in \mathcal{C}(\Tilde{X}_{n+1})\bigr]$ is bounded from below by the random quantity $\tau$ (note that $\tau$ depends on the calibration data); this lower bound can thus fail in general.\footnote{This could make sense if the probability $\mathbb{P}\bigl[Y_{n+1} \in \mathcal{C}(\Tilde{X}_{n+1})\bigr]$ were conditional on the calibration data, but it is not the case in the proof: the probability is with respect to the joint distribution of $(X_i,Y_i)_{1 \leq i \leq n+1}$.}

In the proof, which appears in \citet[Appendix S1, Proof of theorem~2]{rscp}, the authors use Lemma~2 by \citet{romano2019conformalized} which only holds for deterministic values of $\tau$ (or $\alpha$, following the notation in \citealt{romano2019conformalized}). Since $\tau$ is random, their lemma can unfortunately not be used here.

\section{Proof of Theorem~\ref{thm:robustcoverageuniform}}
\label{app:new-proof}
Let $\mathcal{X}\subset\R^d$ (Borel subset) and $\norm{\cdot}$ a norm on $\R^d$. Let also $\mathcal{Y}=\{1,\cdots,K\}$ and $s:\mathcal{X}\times\mathcal{Y}\rightarrow\R$ be a measurable function (non-conformity score).

For any $x\in\mathcal{X}$ and $\epsilon \geq 0$, we set $\bouleps(x):=\left\{ \tilde{x} \in \mathcal{X} : \norm{\tilde{x} - x} \leq \epsilon\right\}$.

There are some minor measurability subtleties in the paper and the following proof, which are overlooked here for readability, but discussed in Section~\ref{sec:measurability}.

\subsection{On the continuity of $\covmax$ and $\covmin$}
\label{subsec:continuity}

\paragraph{In this section and the next}
We assume that $\CalDataset$ is fixed, so that $s(\cdot, \cdot)$ and $q_\alpha$ are deterministic. All probabilities or expectations below are taken w.r.t. the random draws of $\HoldoutDataset$ and/or $(\XYt)$. \footnote{In fact, we also work conditionally to the training set, which is treated as deterministic in all the paper.} Furthermore, since in our paper $(\CalDataset)$ is independent from $(\HoldoutDataset,(\XYt))$, all inequalities below translate into inequalities that are valid for a.e. $\CalDataset$, provided that $\mathbb{P}(\cdot)$ and $\Ex{\cdots}$ are replaced with $\mathbb{P}(\cdots | \CalDataset)$ and $\Ex{\cdots | \CalDataset}$.\\

We now study $\covmax$ and $\covmin$. For $\epsilon\geq 0$, recall that 
$$
\covmax(\epsilon) = \mathbb{P}\left(\sie(\XYt)\leq q_\alpha\right)\quad \textrm{and} \quad \covmin(\epsilon)=\mathbb{P}\left(\sse(\XYt)\leq q_\alpha\right) \;,
$$
where, in order to emphasize the dependency in $\epsilon$, we wrote
\[
\sie(x,y)=\inf\limits_{\tilde{x}\in\bouleps(x)} s(\tilde{x},y) \quad \textrm{and} \quad  \sse(x,y) = \sup\limits_{\tilde{x}\in\bouleps(x)} s(\tilde{x},y) \;.
\]

\begin{proposition}
\label{prop:continuityGamma}
    Assume that A1 and A2 from \ref{assum:X-s} hold true. Then, $\covmax$ is right-continuous on $[0,+\infty)$ and $\covmin$ is left-continuous on $(0,+\infty)$.
\end{proposition}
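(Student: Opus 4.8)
The plan is to prove the two semicontinuity statements separately, exploiting a monotonicity-plus-limit argument in each case. The starting point is the observation that $\epsilon \mapsto \bouleps(x)$ is monotone (increasing) in $\epsilon$, so $\sie(x,y)$ is nonincreasing in $\epsilon$ while $\sse(x,y)$ is nondecreasing in $\epsilon$. Consequently, for each fixed pair $(x,y)$, the events $\{\sie(\XYt) \le q_\alpha\}$ and $\{\sse(\XYt) \le q_\alpha\}$ are respectively increasing and decreasing in $\epsilon$; hence $\covmax$ is nonincreasing and $\covmin$ is nondecreasing. (Be mindful of the naming convention in the excerpt: $\covmax(\epsilon) = \mathbb{P}(\sie(\XYt) \le q_\alpha)$ uses the \emph{conservative}, i.e.\ infimum, score — this is why $\covmax$ decreases in $\epsilon$.)

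For the right-continuity of $\covmax$ at a point $\epsilon_0 \ge 0$, I would fix a decreasing sequence $\epsilon_n \downarrow \epsilon_0$ and show that the pointwise limit of the infimum scores is the infimum score at $\epsilon_0$: that is, $\lim_{n} \underline{s}_{\epsilon_n}(x,y) = \underline{s}_{\epsilon_0}(x,y)$ for every $(x,y)$. This is where continuity (A2) and closedness/convexity of $\mathcal{X}$ (A1) enter. The limit $\lim_n \underline{s}_{\epsilon_n}(x,y)$ exists by monotonicity and is $\ge \underline{s}_{\epsilon_0}(x,y)$ trivially since $\bouleps[\epsilon_0](x) \subseteq \bouleps[\epsilon_n](x)$; for the reverse inequality, pick near-minimizers $\tilde x_n \in \bouleps[\epsilon_n](x)$ with $s(\tilde x_n, y) \le \underline{s}_{\epsilon_n}(x,y) + 1/n$. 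These lie in the compact set $\bouleps[\epsilon_1](x)$ (closed and bounded in $\R^d$, using A1 to ensure the ball-intersection is closed), so a subsequence converges to some $\tilde x_\infty$; since $\norm{\tilde x_n - x} \le \epsilon_n \downarrow \epsilon_0$, we get $\tilde x_\infty \in \bouleps[\epsilon_0](x)$, and by continuity (A2) $s(\tilde x_\infty, y) = \lim s(\tilde x_n, y) \le \lim_n \underline{s}_{\epsilon_n}(x,y)$, hence $\underline{s}_{\epsilon_0}(x,y) \le \lim_n \underline{s}_{\epsilon_n}(x,y)$. This gives pointwise convergence of the (monotone, bounded-below-by-$q_\alpha$-or-not) indicator sequence $\mathds{1}\{\underline{s}_{\epsilon_n}(\XYt) \le q_\alpha\}$ to $\mathds{1}\{\underline{s}_{\epsilon_0}(\XYt) \le q_\alpha\}$ \emph{on the event} $\{\underline{s}_{\epsilon_0}(\XYt) \le q_\alpha\}$ and — more carefully — I need to handle the boundary case $\underline{s}_{\epsilon_0}(\XYt) = q_\alpha$ where the indicators might not converge; but since the indicators are monotone increasing in $n$ and bounded, dominated (or monotone) convergence applies to the \emph{probabilities}: $\covmax(\epsilon_n) = \mathbb{P}(\underline{s}_{\epsilon_n} \le q_\alpha) \uparrow \mathbb{P}(\bigcup_n \{\underline{s}_{\epsilon_n} \le q_\alpha\})$. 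The subtle point is that $\bigcup_n \{\underline{s}_{\epsilon_n}(\XYt) \le q_\alpha\} = \{\lim_n \underline{s}_{\epsilon_n}(\XYt) \le q_\alpha\}$ need not equal $\{\underline{s}_{\epsilon_0}(\XYt) \le q_\alpha\}$ if the limit equals $q_\alpha$ but is not attained at $\epsilon_0$ — however by the pointwise convergence just established, $\lim_n \underline{s}_{\epsilon_n} = \underline{s}_{\epsilon_0}$ exactly, so the two events \emph{do} coincide, and right-continuity follows.

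The left-continuity of $\covmin$ on $(0,\infty)$ is the symmetric argument with the \emph{sup} score, but here there is a genuine asymmetry that I expect to be the main obstacle. Fix $\epsilon_0 > 0$ and $\epsilon_n \uparrow \epsilon_0$. I want $\lim_n \overline{s}_{\epsilon_n}(x,y) = \overline{s}_{\epsilon_0}(x,y)$. By monotonicity $\lim_n \overline{s}_{\epsilon_n}(x,y) \le \overline{s}_{\epsilon_0}(x,y)$; the reverse requires approximating a near-maximizer $\tilde x^* \in \bouleps[\epsilon_0](x)$ by points in the smaller balls $\bouleps[\epsilon_n](x)$. Because $\mathcal{X}$ is convex (A1!), the segment from $x$ to $\tilde x^*$ stays in $\mathcal{X}$, so the points $x_n := x + (\epsilon_n/\epsilon_0)(\tilde x^* - x)$ lie in $\mathcal{X}$, satisfy $\norm{x_n - x} \le \epsilon_n$, hence $x_n \in \bouleps[\epsilon_n](x)$, and $x_n \to \tilde x^*$; by continuity (A2), $\overline{s}_{\epsilon_n}(x,y) \ge s(x_n,y) \to s(\tilde x^*,y)$, which can be taken arbitrarily close to $\overline{s}_{\epsilon_0}(x,y)$. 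This is exactly where convexity is indispensable and why $\covmin$ is only claimed left-continuous on the \emph{open} interval $(0,\infty)$ — at $\epsilon_0 = 0$ the rescaling $\epsilon_n/\epsilon_0$ breaks down. Having established $\overline{s}_{\epsilon_n} \uparrow \overline{s}_{\epsilon_0}$ pointwise, the events $\{\overline{s}_{\epsilon_n}(\XYt) \le q_\alpha\}$ are \emph{decreasing} in $n$, so $\covmin(\epsilon_n) = \mathbb{P}(\overline{s}_{\epsilon_n} \le q_\alpha) \downarrow \mathbb{P}(\bigcap_n\{\overline{s}_{\epsilon_n} \le q_\alpha\}) = \mathbb{P}(\sup_n \overline{s}_{\epsilon_n} \le q_\alpha) = \mathbb{P}(\overline{s}_{\epsilon_0} \le q_\alpha) = \covmin(\epsilon_0)$, again using the exact pointwise limit to identify the intersection event. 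The only remaining care is the measurability of $x \mapsto \underline{s}_\epsilon(x,y)$ and $x \mapsto \overline{s}_\epsilon(x,y)$ (infima/suprema of a continuous function over a compact, parametrized family), which I would dispatch by noting these are continuous in $x$ under A1–A2 (or cite the measurability discussion in Section~\ref{sec:measurability}).
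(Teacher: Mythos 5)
Your argument is correct in substance and takes a genuinely different route from the paper's. The paper proves right-continuity of $\covmax$ via an $\eta$--$\delta$ argument: it truncates to a compact set $K$ with $\mathbb{P}(\Xt\in K)\ge 1-\eta/2$, uses uniform continuity of $s(\cdot,y)$ on an enlargement of $K$ together with convexity of $\mathcal{X}$ to show $\sie(x,y)\le \underline{s}_{\epsilon+\delta}(x,y)+\rho$, and then absorbs the slack $\rho$ through the right-continuity of the conditional c.d.f.\ $t\mapsto\mathbb{P}(\sie(\XYt)\le t\mid \Xt\in K)$ at $t=q_\alpha$. You instead establish the exact pointwise limits $\underline{s}_{\epsilon_n}\uparrow\underline{s}_{\epsilon_0}$ as $\epsilon_n\downarrow\epsilon_0$ (via compactness of $\boule_{\epsilon_1}(x)=\mathcal{X}\cap\{x':\norm{x'-x}\le\epsilon_1\}$, which only needs closedness of $\mathcal{X}$) and $\bar{s}_{\epsilon_n}\uparrow\bar{s}_{\epsilon_0}$ as $\epsilon_n\uparrow\epsilon_0$ (via the convex rescaling along $[x,\tilde{x}^*]$, which is exactly where A1 is indispensable and mirrors the segment argument in the paper's proof of its inequality \eqref{eq:sie-ineq}), and then conclude by monotone continuity of the probability measure. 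This is arguably cleaner: it dispenses with the compact truncation and the c.d.f.\ step entirely, because the identity $\bigcap_n\{f_n\le q_\alpha\}=\{\sup_n f_n\le q_\alpha\}$ for an increasing sequence $(f_n)$ handles the threshold exactly, leaving no boundary case at $q_\alpha$ to worry about.

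One bookkeeping slip must be fixed in the $\covmax$ half. Since $\underline{s}_{\epsilon_n}$ \emph{increases} as $\epsilon_n\downarrow\epsilon_0$, the events $\{\underline{s}_{\epsilon_n}(\XYt)\le q_\alpha\}$ \emph{decrease}, so $\covmax(\epsilon_n)$ converges \emph{downward} to $\mathbb{P}\bigl(\bigcap_n\{\underline{s}_{\epsilon_n}(\XYt)\le q_\alpha\}\bigr)$ --- not upward to the probability of the union, and the identity you state for that union is false (for increasing $f_n$ the union $\bigcup_n\{f_n\le q_\alpha\}$ is just the first event, not $\{\lim_n f_n\le q_\alpha\}$). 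The correct chain is $\bigcap_n\{\underline{s}_{\epsilon_n}\le q_\alpha\}=\{\sup_n\underline{s}_{\epsilon_n}\le q_\alpha\}=\{\underline{s}_{\epsilon_0}\le q_\alpha\}$, which is precisely the step you carry out correctly in the $\covmin$ half; with that repair, and with the measurability of $\sie(\cdot,y)$ and $\sse(\cdot,y)$ that you rightly defer to Proposition~\ref{prop:supinf-continuous}, the proof is complete.
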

\begin{proof}
    We start with $\covmax$. Let $\epsilon\geq 0$ and $\eta>0$. We show that there exists $\delta>0$ s.t. $\covmax(\epsilon+\delta)\leq\covmax(\epsilon)+\eta$, which will (by monotonicity of $\covmax$) imply that $\abs{\covmax(\epsilon^\prime)-\covmax(\epsilon)}\leq \eta$ for all $\epsilon^\prime\in[\epsilon, \epsilon+\delta]$.

    Let $K\subset \mathcal{X}$ be any compact set such that $\mathbb{P}(\Xt \in K) \geq 1 - \frac{\eta}{2}$.

    By right-continuity of the cdf $t\mapsto F_\epsilon(t):=\mathbb{P}\left(\sie(\XYt)\leq t | \Xt \in K \right)$, there exists $\rho>0$ s.t. $F_\epsilon (q_\alpha + \rho) \leq F_\epsilon(q_\alpha) + \frac{\eta}{2}$.

    We now use the fact that, for each $y\in\mathcal{Y}$ (with $\mathcal{Y}$ finite), the function $s(\cdot, y)$ is continuous and thus uniformly continuous on the compact set $K_{\epsilon+1}:=\left\{ x+u : x\in K, u\in\R^d, \norm{u}\leq\epsilon+1\right\}$. Therefore, there exists $\delta\in(0,1)$ s.t., 
    \begin{equation}
        \label{eq:conti}
        \forall y \in \mathcal{Y}, \forall x, x^\prime \in K_{\epsilon+1}, \norm{x-x^\prime}\leq \delta \implies \abs{s(x,y)-s(x^\prime, y)}\leq\rho.
    \end{equation}
    Note that, for any $y\in\mathcal{Y}$ and $x\in K$, by \eqref{eq:conti} above,
    \begin{equation}
    \label{eq:sie-ineq}
        \sie(x,y)=\inf\limits_{\tilde{x}\in\bouleps(x)}s(\tilde{x},y)\leq\inf\limits_{\tilde{x}\in\boule_{\epsilon+\delta}(x)}s(\tilde{x},y)+\rho = \underline{s}_{\epsilon+\delta}(x,y)+\rho.
    \end{equation}
    \begin{figure}
        \centering
        \includegraphics[width=0.3\columnwidth]{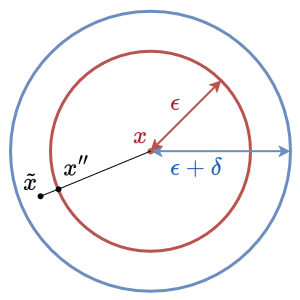}
        \caption{Illustration of the proof of \eqref{eq:sie-ineq}. On the ball $\mathcal{B}_{\epsilon + \delta} (x)$, the score $x \mapsto s(x,y)$ is minimized at some $\Tilde{x}$. On the smaller ball $\mathcal{B}_{\epsilon}(x)$, the minimum can only be larger, but not larger than $s(x'',y)$ which is close to $s(\Tilde{x},y)$ by continuity of $s(\cdot,y)$.}
        \label{fig:preuve-2}
    \end{figure}
    This follows from Figure \ref{fig:preuve-2}. More formally, let $\tilde{x}\in\boule_{\epsilon+\delta}(x)$ be such that $s(\tilde{x},y)=\underline{s}_{\epsilon+\delta}(x,y)$. Inequality \eqref{eq:sie-ineq} is immediate if $\norm{\tilde{x}-x} \leq \epsilon$. We thus assume without loss of generality that $\norm{\tilde{x}-x}>\epsilon$. Then, let $x^{\prime\prime}\in[x,\tilde{x}] \subset\mathcal{X} $ be such that $\norm{x^{\prime\prime}-x}=\epsilon$ (recall that $\mathcal{X}$ is convex). In this case $\norm{\tilde{x}-x^{\prime\prime}}\leq\delta$ and thus (by \eqref{eq:conti}) $s(x^{\prime\prime},y)\leq s(\tilde{x}, y)+\rho$ which implies $\underline{s}_\epsilon (x,y)\leq s(x^{\prime\prime},y) \leq \underline{s}_{\epsilon+\delta}(x,y)+\rho$. This concludes the proof of \eqref{eq:sie-ineq}.

    We are ready to conclude:
    \begin{equation*}
        \begin{aligned}
        & \covmax(\epsilon+\delta)=\mathbb{P}\left(\underline{s}_{\epsilon+\delta}\left(\Xt, \Yt\right) \leq q_\alpha\right) \\
        & \leq \mathbb{P}\left(\Xt \in K\right) \cdot \mathbb{P}\left(\underline{s}_{\epsilon+\delta}\left(\Xt, \Yt\right) \leq q_\alpha \mid \Xt \in K\right)+\mathbb{P}\left(\Xt \notin K\right) \\
        & \stackrel{\text{by }\eqref{eq:sie-ineq}}{\leq} \mathbb{P}\left(\Xt \in K\right) \cdot \underbrace{\mathbb{P}\left(\underline{s}_{\epsilon}\left(\Xt, \Yt\right) \leq q_\alpha+\rho \mid \Xt \in K\right)}_{=F_{\epsilon}\left(q_\alpha+\rho\right) 
        \leq F_{\epsilon}\left(q_\alpha\right)+\frac{\eta}{2}}+\frac{\eta}{2}\\
        & \leq \mathbb{P}\left(\Xt \in K\right) \cdot\left[\mathbb{P}\left(\left.\sie\left(\Xt,\Yt\right) \leq q_\alpha \right\rvert\, \Xt \in K\right)+\frac{\eta}{2}\right]+\frac{\eta}{2} \\
        & \leq \mathbb{P}\left(\sie\left(\Xt, \Yt\right) \leq q_\alpha\right)+\eta \\
        & =\covmax(\epsilon)+\eta
        \end{aligned}
    \end{equation*}

    This entails $\abs{\covmax(\epsilon^\prime)-\covmax(\epsilon)}\leq\eta$ for all $\epsilon^\prime \in [\epsilon,\epsilon+\delta]$, and proves that $\covmax$ is right-continuous.

    The proof that $\covmin$ is left-continuous follows from similar arguments. Put briefly: for all $\epsilon>0$ and $\eta>0$, there exist a compact subset $K\subset\mathcal{X}$ and two real numbers $\rho>0$ and $\delta\in(0,\epsilon)$ such that
    \begin{equation*}
        \begin{aligned}
            & \covmin(\epsilon-\delta)=\mathbb{P}\left(\bar{s}_{\epsilon-\delta}\left(\Xt, \Yt\right) \leq q_\alpha\right) \\
        & \leq \mathbb{P}\left(\Xt \in K\right) \cdot \mathbb{P}\left(\bar{s}_{\epsilon-\delta}\left(\Xt, \Yt\right) \leq q_\alpha \mid \Xt \in K\right)+\mathbb{P}\left(\Xt \notin K\right) \\
        & \stackrel{s(\cdot,y) \text{ u.c. on } K}{\leq} \mathbb{P}\left(\Xt \in K\right) \cdot \mathbb{P}\left(\bar{s}_{\epsilon}\left(\Xt, \Yt\right) \leq q_\alpha+\rho \mid \Xt \in K\right)+\frac{\eta}{2}\\
        & \leq \mathbb{P}\left(\Xt \in K\right) \cdot\left[\mathbb{P}\left(\left.\sse\left(\Xt,\Yt\right) \leq q_\alpha \right\rvert\, \Xt \in K\right)+\frac{\eta}{2}\right]+\frac{\eta}{2} \\
        & \leq \mathbb{P}\left(\sse\left(\Xt, \Yt\right) \leq q_\alpha\right)+\eta \\
        & =\covmin(\epsilon)+\eta
        \end{aligned}
    \end{equation*}
    
    \end{proof}

    \paragraph{N.B.}
    The proof is a little easier if $s(\cdot,y)$ is uniformly continuous on $\mathcal{X}$ (e.g. if $\mathcal{X}$ is compact or $s(\cdot,y)$ is Lipschitz). In that case, there is no need to work conditionally on $\left\{\Xt \in K \right\}$.

    \subsection{Concentration of $\covmaxm$ around $\covmax$}

    Let $\HoldoutDataset=\left( X_i, Y_i \right)_{1\leq i\leq m}$ be $m\geq 2$ independent copies of $(\XYt)$.

    For $\epsilon>0$, let 
    $$
        \covmaxm(\epsilon) := \frac{1}{n} \sum\limits_{i=1}^{m}\mathds{1}_{\sie(\XYi)\leq q_\alpha}.
    $$

    Let $\delta\in(0,1)$. Since $m\covmaxm(\epsilon)\sim \mathrm{Binomial}(m,\covmax(\epsilon))$, the estimator
    $$
        \covmaxmp(\epsilon, \delta) := \max \left\{ p\in[0,1] : \mathrm{Bin}_{m,p}(m\covmaxm(\epsilon))\geq \delta\right\},
    $$
    where $\mathrm{Bin}_{m,p}$ is the c.d.f. of $\mathrm{Binomial}(m,p)$. This estimator satisfies (e.g., Theorem~3.3 by \citealt{langford2005tutorial}), for all $\epsilon \geq 0$ and $\delta\in(0,1)$,
    \begin{equation}
        \label{eq:ucb}
        \mathbb{P}\left(\covmax(\epsilon)\leq\covmaxmp(\epsilon,\delta)\right)\geq 1-\delta
    \end{equation}

    Applying \eqref{eq:ucb} $m-1$ times with properly chosen values of $\epsilon$ (of a quantile flavor), and using the fact that both $\covmax$ and $\covmaxmp(\cdot, \delta)$ are a.s. right-continuous and non-decreasing, we obtain the following ``uniform'' concentration inequality.

    \begin{theorem}
    \label{thm:appendix-thm-1}
        Assume that A1 and A2 from \ref{assum:X-s} hold true.
        Let $(\XYi)_{1\leq i \leq m}$ be $m\geq 2$ independent copies of $(\XYt)$. Then:
        $$
            \mathbb{P}\left(\forall \epsilon\geq0, \covmax(\epsilon)\leq\covmaxmp(\epsilon,\frac{\delta}{m-1})+\frac{1}{m}\right)\geq 1-\delta.
        $$
    \end{theorem}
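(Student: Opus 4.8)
The plan is to reduce the ``uniform over all $\epsilon \geq 0$'' statement to finitely many applications of the pointwise confidence bound \eqref{eq:ucb}, by exploiting the monotonicity and right-continuity of both $\covmax$ and $\covmaxmp(\cdot,\delta)$. The key observation is that $\covmax$ is a non-decreasing, right-continuous function taking values in $[0,1]$, so its graph is determined by the (at most countably many) levels at which it jumps; to control it uniformly it suffices to pin it down at a well-chosen finite grid of ``quantile-type'' points. Concretely, I would first set $\epsilon_0 = 0$ and, for $j = 1, \dots, m-1$, define $\epsilon_j = \inf\{\epsilon \geq 0 : \covmax(\epsilon) \geq j/m\}$ (with the convention $\inf \emptyset = +\infty$, in which case that index and all larger ones are vacuous). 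By right-continuity of $\covmax$ (Proposition~\ref{prop:continuityGamma}), $\covmax(\epsilon_j) \geq j/m$ whenever $\epsilon_j < \infty$, and by definition $\covmax(\epsilon) < j/m$ for all $\epsilon < \epsilon_j$.

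Next I would apply the pointwise bound \eqref{eq:ucb} at each of the finitely many points $\epsilon_1, \dots, \epsilon_{m-1}$ with risk level $\delta/(m-1)$, and take a union bound: with probability at least $1-\delta$, simultaneously for all $j \in \{1,\dots,m-1\}$ with $\epsilon_j < \infty$,
\begin{equation*}
    \covmax(\epsilon_j) \leq \covmaxmp\Bigl(\epsilon_j, \tfrac{\delta}{m-1}\Bigr).
\end{equation*}
Now fix any $\epsilon \geq 0$ on this event and let $j$ be the largest index in $\{0,1,\dots,m-1\}$ with $\epsilon_j \leq \epsilon$ (such a $j$ exists since $\epsilon_0 = 0 \leq \epsilon$). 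Then $\epsilon < \epsilon_{j+1}$ (or $j = m-1$), so $\covmax(\epsilon) < (j+1)/m$, i.e. $\covmax(\epsilon) \leq j/m + 1/m$. On the other hand, since $\covmaxmp(\cdot,\delta)$ is non-decreasing and $\epsilon_j \leq \epsilon$, we get $\covmaxmp(\epsilon_j, \delta/(m-1)) \leq \covmaxmp(\epsilon, \delta/(m-1))$; combining with $\covmax(\epsilon_j) \geq j/m$ and the union-bound event gives $j/m \leq \covmaxmp(\epsilon, \delta/(m-1))$. Chaining, $\covmax(\epsilon) \leq j/m + 1/m \leq \covmaxmp(\epsilon, \delta/(m-1)) + 1/m$, which is exactly the claimed inequality. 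The case $j = 0$ (i.e. $\epsilon < \epsilon_1$, so $\covmax(\epsilon) < 1/m$) is handled directly since then $\covmax(\epsilon) \leq 1/m \leq \covmaxmp(\epsilon,\delta/(m-1)) + 1/m$, using $\covmaxmp \geq 0$.

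The main obstacle, and the step that needs the most care, is ensuring the finite grid $\{\epsilon_1,\dots,\epsilon_{m-1}\}$ genuinely ``covers'' the entire range $[0,\infty)$ with the $1/m$ slack — this is where right-continuity of $\covmax$ is essential, since without it the infimum defining $\epsilon_j$ might not be attained and the bound $\covmax(\epsilon_j) \geq j/m$ could fail. A secondary subtlety is that $\covmaxmp(\cdot,\delta)$ must indeed be non-decreasing and right-continuous: this follows because it is a non-decreasing function of $\covmaxm(\cdot)$ (by monotonicity of $p \mapsto \mathrm{Bin}_{m,p}(k)$ and $k \mapsto \mathrm{Bin}_{m,p}(k)$), and $\covmaxm(\cdot)$, being a finite sum of indicators $\mathds{1}\{\underline{s}_\epsilon(X_i,Y_i) \leq q_\alpha\}$ with $\epsilon \mapsto \underline{s}_\epsilon(x,y)$ non-increasing and (one can check) right-continuous in $\epsilon$, inherits these properties. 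I would state these monotonicity/continuity facts as a short lemma before the main argument. Finally, one should note the measurability of the event $\{\forall \epsilon \geq 0,\ \covmax(\epsilon) \leq \covmaxmp(\epsilon,\delta/(m-1)) + 1/m\}$, which reduces — again by right-continuity — to an intersection over rational $\epsilon$, hence is measurable; this is consistent with the measurability caveats already flagged in Section~\ref{sec:measurability}.
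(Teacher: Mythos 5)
Your proposal is correct and follows essentially the same route as the paper's proof: the same quantile-type grid $\epsilon_j = \inf\{\epsilon \geq 0 : \covmax(\epsilon) \geq j/m\}$, the same pointwise application of \eqref{eq:ucb} at risk $\delta/(m-1)$ followed by a union bound, and the same monotonicity argument to bridge the gaps between grid points (your $\epsilon_0 = 0$ convention just absorbs the paper's separate treatment of $\epsilon < \epsilon_1$). The supporting facts you flag as a lemma (right-continuity of $\covmax$, monotonicity and right-continuity of $\covmaxmp(\cdot,\delta)$ via $\covmaxm$) are exactly the content of the paper's Step~1 and Proposition~\ref{prop:continuityGamma}.
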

    As seen in the proof below, $\covmax$ and $\covmaxmp(\cdot, \delta)$ are right-continuous a.s., so that the above probability is well defined (the $\forall\epsilon$ can be replaced with a countable $\forall\epsilon_k$).
    \begin{proof}
        We proceed in three steps.
        \paragraph{Step 1: right-continuity of $\covmax$ and $\covmaxmp(\cdot, \delta)$}
        \begin{itemize}
            \item the fact that $\covmax$ is right-continuous follows from Subsection \ref{subsec:continuity}.
            \item likewise, using this result with the empirical distribution $\frac{1}{m}\sum_{i=1}^{m}\delta_{(\XYi)}$, we can see that the random function $\epsilon\mapsto\covmaxm(\epsilon)$ is right-continuous, and thus locally constant to the right of any $\epsilon\geq0$. This implies that $\epsilon\mapsto\covmaxmp(\epsilon,\delta)$ is also right-continuous on $[0,+\infty)$.
        \end{itemize}

        \paragraph{Step 2: pointwise concentration and union bound}
        For every $k\in\{1,\cdots,m-1\}$, we set $\epsilon_k := \inf \left\{ \epsilon \geq 0 : \covmax(\epsilon)\geq\frac{k}{m}\right\}$, with the convention that $\inf \varnothing = +\infty$. Let also $\epsilon_m := +\infty$.
        Note that $0\leq\epsilon_1\leq \epsilon_2 \leq \cdots\leq \epsilon_m\leq +\infty$ and that $\covmax(\epsilon_k)\geq\frac{k}{m}$ whenever $\epsilon_k < \infty$ (by right-continuity of $\covmax$ on $[0,+\infty)$).
        
        First note that, if $\epsilon_1=+\infty$, then $\covmax(\epsilon)<\frac{1}{m}$ for all $\epsilon \geq 0$ by def of $\epsilon_1$). 
        In this case, the conclusion of Theorem \ref{thm:appendix-thm-1} would hold trivially.

        We can thus assume without loss of generality that $\epsilon_1<+\infty$, and set 
        $$
            K:=\max\left\{k\in\{1,\cdots,m-1\}: \epsilon_k <+\infty\right\}.
        $$
        Note that $0\leq\epsilon_1\leq\cdots\leq\epsilon_K<\epsilon_{K+1}=\cdots=\epsilon_m=+\infty$.

        Combining \eqref{eq:ucb} with a union bound, and using $K\leq m-1$, we get:

        \begin{equation}
            \label{eq:union-bound}
            \mathbb{P}\left(\forall k\in\{1,\cdots,K\}, \covmax(\epsilon_k)\leq\covmaxmp(\epsilon_k, \frac{\delta}{m-1})\right)\geq 1-\frac{K\delta}{m-1}\geq 1-\delta.
        \end{equation}

        \paragraph{Step 3: bridging the gaps}
        
        Let $\Omega_\delta := \left\{ \forall k \in \{1,\cdots,K\}, \covmax(\epsilon_k)\leq\covmaxmp(\epsilon_k, \frac{\delta}{m-1} \right\}$ be the event appearing in \eqref{eq:union-bound}. 
        We now work on the event $\Omega_\delta$.

        Let $\epsilon\geq\epsilon_1$. Note that $\epsilon$ belongs to one of the following K disjoint (possibly empty) intervals:
        $$
            I_k = [\epsilon_k, \epsilon_{k+1}), k\in\{1,\cdots,K\} \quad \text{(by convention, $I_k=\varnothing$ if $\epsilon_k=\epsilon_{k+1}$)}.
        $$
        We let $k\in\{1,\cdots,K\}$ be such that $\epsilon\in I_k$, i.e., $\epsilon_k \leq \epsilon < \epsilon_{k+1}$.

        Recall that $\covmax(\epsilon_k)\geq\frac{k}{m}$, and note that $\covmax(\epsilon)\leq\frac{k+1}{m}$. 

        Therefore,
        \begin{equation*}
            \begin{aligned}
                \covmax(\epsilon) &\leq \frac{k+1}{m}\\
                &\leq \covmax(\epsilon_k) + \frac{1}{m}\\
                &\leq \covmaxmp(\epsilon_k,\frac{\delta}{m-1})+\frac{1}{m} & \text{(we work on $\Omega_\delta$)}\\
                &\leq \covmaxmp(\epsilon, \frac{\delta}{m-1})+\frac{1}{m} &\text{(by $\epsilon_k \leq \epsilon$ and monotonicity of $\covmaxmp(\cdot,\frac{\delta}{m-1}))$}
            \end{aligned}
        \end{equation*}

        Since we also have $\covmax(\epsilon)\leq\frac{1}{m}$ if $\epsilon_1>0$ and $\epsilon \in [0,\epsilon_1)$ (by definition of $\epsilon_1$), we just proved that, on the event $\Omega_\delta$,
        $$
            \forall \epsilon \geq 0, \covmax(\epsilon) \leq \covmaxmp(\epsilon, \frac{\delta}{m-1})+\frac{1}{m}.
        $$
        Recalling that $\mathbb{P}(\Omega_\delta)\geq 1-\delta$ concludes the proof.
    \end{proof}

    We can control $\covmin=\mathbb{P}(\sse(\XYt)\leq q_\alpha)$ similarly (up to a coin flip), using
    $$
        \covminm := \frac{1}{m}\sum\limits_{i=1}^{m}\mathds{1}_{\sse(\XYi) \leq q_\alpha}
    $$
    and
    $$
        \covminmm(\epsilon,\delta) := 1-\max\left\{p\in[0,1] : \mathrm{Bin}_{m,p}(m(1-\covminm(\epsilon)))\geq\delta\right\}.
    $$
    We have the following lower bound.


    \begin{theorem}
    \label{thm:appendix-thm-2}
        Assume that A1 and A2 from \ref{assum:X-s} hold true. Let $(\XYi)_{1\leq i \leq m}$ be $m\geq2$ independent copies of $(\XYt)$. Then:

        \begin{equation}
            \mathbb{P}\left(\forall \epsilon > 0, \covmin(\epsilon) \geq \covminmm\left(\epsilon, \frac{\delta}{m-1}\right)-\frac{1}{m}\right) \geq 1-\delta.
        \end{equation}
    \end{theorem}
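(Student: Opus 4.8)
The plan is to mirror, in a ``reflected'' form, the three-step proof of Theorem~\ref{thm:appendix-thm-1}: throughout, upper bounds become lower bounds, $\covmax$ becomes $\covmin$, right-continuity becomes left-continuity, and the pointwise confidence bound is obtained by a ``coin flip''. \emph{Step 1 (regularity).} First I would record that $\covmin$ is left-continuous on $(0,+\infty)$ by Proposition~\ref{prop:continuityGamma}, and non-increasing because $\epsilon\mapsto\bar s_\epsilon(\cdot,\cdot)$ is non-decreasing. Applying Proposition~\ref{prop:continuityGamma} to the empirical measure $\frac1m\sum_i\delta_{(X_i,Y_i)}$ shows that $\epsilon\mapsto\covminm(\epsilon)$ is left-continuous and non-increasing; since $k\mapsto\mathrm{Bin}_{m,p}(k)$ is non-decreasing and hence $k\mapsto\max\{p:\mathrm{Bin}_{m,p}(k)\ge\delta\}$ is non-decreasing, the same regularity is inherited by $\epsilon\mapsto\covminmm(\epsilon,\delta)$. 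Consequently both functions are determined by their restriction to $\mathbb{Q}\cap(0,+\infty)$ approached from the left, so the event in the statement is measurable (it is a countable intersection over rationals). \emph{Step 2 (pointwise lower confidence bound).} For a fixed $\epsilon$, the count $m(1-\covminm(\epsilon))=\sum_{i=1}^m\mathds{1}\{\bar s_\epsilon(X_i,Y_i)>q_\alpha\}$ is $\mathrm{Binomial}(m,\,1-\covmin(\epsilon))$-distributed, so applying the binomial upper confidence bound (Theorem~3.3 of \citet{langford2005tutorial}) to $1-\covmin(\epsilon)$ yields $\mathbb{P}\bigl(\covmin(\epsilon)\ge\covminmm(\epsilon,\delta')\bigr)\ge 1-\delta'$ for every $\epsilon\ge 0$ and $\delta'\in(0,1)$ --- this is the ``coin flip'' of the main text.

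\emph{Step 3 (deterministic grid and gap-bridging).} I would set $\epsilon_k:=\sup\{\epsilon>0:\covmin(\epsilon)\ge k/m\}$ for $k=1,\dots,m-1$ (with $\sup\varnothing:=0$), so that $\epsilon_1\ge\cdots\ge\epsilon_{m-1}$ and, whenever $0<\epsilon_k<\infty$, $\covmin(\epsilon_k)\ge k/m$ by left-continuity. Apply Step~2 at each such $\epsilon_k$ with $\delta'=\delta/(m-1)$ and union-bound over the at most $m-1$ of them, obtaining an event $\Omega$ with $\mathbb{P}(\Omega)\ge 1-\delta$ on which $\covmin(\epsilon_k)\ge\covminmm(\epsilon_k,\delta')$ for all relevant $k$. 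It then remains to interpolate on $\Omega$: if $\covmin(\epsilon)\ge(m-1)/m$ the desired inequality holds trivially since $\covminmm(\epsilon,\delta')\le 1$; otherwise let $k=\lfloor m\,\covmin(\epsilon)\rfloor\in\{0,\dots,m-2\}$, note that $\epsilon_{k+1}<\epsilon\le\epsilon_k$ with $\epsilon_{k+1}<\infty$, transfer the bound from the appropriate neighbouring grid point to $\epsilon$ using the monotonicity of $\covminmm(\cdot,\delta')$, and absorb the level gap of size at most $1/m$ via $\covmin(\epsilon)\ge k/m$; the discontinuities of $\covmin$ are handled exactly as the collapsing of consecutive $\epsilon_k$'s in Theorem~\ref{thm:appendix-thm-1}, and the tail $\epsilon\to+\infty$ is treated by monotone convergence, using $\bar s_\epsilon(X_i,Y_i)\to\sup_{x\in\mathcal{X}}s(x,Y_i)$ so that the pointwise bound of Step~2 still applies in the limit.

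The hard part will be Step~3's interpolation, because the asymmetry with Theorem~\ref{thm:appendix-thm-1} is genuine. There the ``easy'' region is where $\covmax$ is small (handled by $\covmaxmp\ge 0$) and each band $[\epsilon_k,\epsilon_{k+1})$ is pinned by the value of $\covmax$ at its \emph{left} endpoint, after which monotonicity of $\covmaxmp(\cdot,\delta')$ propagates the bound to the right. Here the easy region is instead where $\covmin$ is close to $1$, and on each band one must reference the correct grid point so that the monotonicity of $\covminmm(\cdot,\delta')$ points the right way and the $1/m$ slack is not lost, while carefully accounting for downward jumps of $\covmin$ and for the behaviour as $\epsilon\to+\infty$. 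Once this is carried out, Theorem~\ref{thm:appendix-thm-2} follows, and combined with Theorem~\ref{thm:appendix-thm-1} (both applied at risk level $\delta/2$) together with $\covmin(\epsilon)\le\cov_h(\epsilon)\le\covmax(\epsilon)$ and a final union bound, it yields Theorem~\ref{thm:robustcoverageuniform}.
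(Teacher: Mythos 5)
Your Steps 1 and 2 match the paper (left-continuity plus monotonicity, and the pointwise binomial lower confidence bound applied ``after a coin flip''), and you correctly identify that the whole difficulty sits in the interpolation of Step 3. But the interpolation you sketch does not close: the recipe ``transfer the bound from $\epsilon_{k+1}$ to $\epsilon$'' fails whenever $\covmin$ has a downward jump of size larger than $1/m$ at $\epsilon_{k+1}$, i.e.\ whenever several of your grid points coincide. Concretely, suppose $\covmin\equiv 1$ on $(0,\epsilon_0]$ and $\covmin\equiv k/m+0.001$ on $(\epsilon_0,\infty)$. Then $\epsilon_{k+1}=\epsilon_{k+2}=\cdots=\epsilon_0$ and, by left-continuity, $\covmin(\epsilon_{k+1})=1$. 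For $\epsilon>\epsilon_0$ your chain gives only $\covminmm(\epsilon,\delta')\leq\covminmm(\epsilon_{k+1},\delta')\leq\covmin(\epsilon_{k+1})=1$, which is vacuous; the step ``absorb the level gap of size at most $1/m$'' requires $\covmin(\epsilon_{k+1})\leq\covmin(\epsilon)+1/m$, and that is false here. The asymmetry with Theorem~\ref{thm:appendix-thm-1} is exactly this: there, right-continuity pins $\covmax(\epsilon_k)\geq k/m$ at the \emph{left} endpoint of each band, which is also the endpoint from which monotonicity of $\covmaxmp(\cdot,\delta')$ propagates, so coinciding $\epsilon_k$'s are harmless. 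Here the reference point must lie to the left of $\epsilon$ (for the monotonicity of $\covminmm(\cdot,\delta')$ to point the right way), yet left-continuity pins the value of $\covmin$ at $\epsilon_{k+1}$ from the wrong side.

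The missing device, which the paper introduces, is to apply the pointwise confidence bound not at the $\epsilon_k$ themselves but at perturbed points $\epsilon_k^q$ lying strictly to the \emph{right} of $\epsilon_k$ (inside the gap $(\epsilon_k,\epsilon_{k+1})$, or at $\epsilon_K+1/q$ for the last one). By definition of $\epsilon_k$ as an infimum of an up-set of the non-decreasing function $1-\covmin$, one has $1-\covmin(\epsilon_k^q)\geq k/m$, which is precisely the upper bound on $\covmin$ at the reference point that your argument lacks; the transfer then costs only $1/m$. Since these reference points depend on $q$, one must finally let $q\to+\infty$ and use that the events $\{\forall\epsilon\in\Epsilon^q,\ \covmin(\epsilon)\geq\covminmm(\epsilon,\delta/(m-1))-1/m\}$ decrease to the event over all $\epsilon\in(\epsilon_1,+\infty)$, each having probability at least $1-\delta$. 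This limiting argument has no counterpart in Theorem~\ref{thm:appendix-thm-1} and is the genuine content of Step 3; without it (or an equivalent mechanism) your proof does not go through. Your treatment of $\epsilon\to+\infty$ via convergence of $\sse$ is also unnecessary once the level bound $1-\covmin(\epsilon)\leq (K+1)/m$ beyond the last finite grid point is used, but that is a minor point.
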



    First, note that our high probability lower bound on $\covmin(\epsilon)$ is similar to our high probability upper bound on $\covmax(\epsilon)$, by a simple coin flip. Indeed, note that:

    \begin{equation}
    \begin{aligned}
        1 - \covmin(\epsilon) &= \mathbb{P}\left(\sse(\XYt) > q_\alpha\right). \\
        1 - \covminm(\epsilon) :&= \frac{1}{m}\sum\limits_{i=1}^m \mathds{1}_{\sse(\XYi) > q_\alpha} \\
        1 - \covminmm(\epsilon, \delta) :&= \max\{ p \in [0,1]: \mathrm{Bin}_{m,p}\left( m \cdot (1 - \covmin(\epsilon))\right) \geq \delta \}
    \end{aligned}
    \end{equation}

    \paragraph{Issue} Though these functions are non-decreasing in $\epsilon$, they are \textit{left}-continuous. This calls for additional technicalities, as seen in the proof below.

    \begin{proof}
    We also proceed with three steps: many arguments are identical, but not all.

    \paragraph{Step 1: left continuity of $\covmin$ and $\covminmm(\cdot, \delta)$ on $(0,+\infty)$} This follows from similar arguments as in the proof of Theorem~\ref{thm:appendix-thm-1}.

    \paragraph{Step 2: Pointwise concentration and union bound}
    For every $k \in \{1,\cdots,m-1\}$, we set $\epsilon_k := \inf \{ \epsilon \geq 0 : 1 - \covmin(\epsilon) \geq \frac{k}{m}\}$, with the convention that $\inf \varnothing = + \infty$. Let also $\epsilon_m := + \infty$.
    
    Note that $0 \leq \epsilon_1 \leq \epsilon_2 \leq \cdots \leq \epsilon_m \leq +\infty$ and that $1-\covmin(\epsilon_k) \leq \frac{k}{m}$ whenever $0 < \epsilon_k \leq +\infty$ (by left continuity of $\covmin$ on $(0,+\infty)$).

    First note that, if $\epsilon_1=+\infty$, then $\covmin(\epsilon) > 1 - \frac{1}{m}$ for all $\epsilon \geq 0$ (by def of $\epsilon_1)$. In this case, the conclusion of Theorem~\ref{thm:appendix-thm-2} would hold trivially.

    We can thus assume without loss of generality that $\epsilon_1 < +\infty$, and set 

    $$
        K:= \max\left\{k\in\{1,\cdots,m-1\}: \epsilon_k < +\infty \right\}.
    $$

    Note that $0 \leq \epsilon_1 \leq \cdots \leq \epsilon_K \leq \epsilon_{K+1} = \cdots = \epsilon_m = + \infty$.

    We now need additional technicalities.

    For any $q \in \mathbb{N}^* = \{1, 2, \cdots\}$ and $k \in \{1,\cdots,K\}$, we set

    \begin{equation*}
        \epsilon_k^q = \begin{cases}
            \left(1-\frac{1}{q}\right) \epsilon_k + \frac{1}{q}\epsilon_{k+1} & \text{if} \ k \leq K-1 \\
            \epsilon_k + \frac{1}{q} & \text{if} \ k = K
        \end{cases}
    \end{equation*}

    Note that 
    \begin{equation*}
        \begin{cases}
            \epsilon_k < \epsilon_k^q < \epsilon_{k+1} & \text{if} \ \epsilon_k < \epsilon_{k+1}. \\
            \epsilon_k = \epsilon_k^q = \epsilon_{k+1} & \text{if} \ \epsilon_k = \epsilon_{k+1}.
        \end{cases}
    \end{equation*}

    and that $\epsilon_k^{q+1} \leq \epsilon_k^q$.

    By Langford (2005, Theorem 3.3), we have similarly  to \eqref{eq:ucb},
    \begin{equation}
    \label{eq:ucb-min}
        \forall \epsilon \geq 0, \forall \delta \in (0,1), \mathbb{P}\left(\covmin(\epsilon)\geq \covminmm(\epsilon,\delta)\right)\geq 1-\delta.
    \end{equation}
        
    Let $\delta\in(0,1)$. Combining \eqref{eq:ucb-min} with a union bound, we get, for any $q\geq 1$,

    \begin{equation}
        \label{eq:uni-min}
        \mathbb{P}\left( \forall k \in \{ 1,\cdots, K\}, \covmin(\epsilon^q_k) \geq \covminmm(\epsilon_k^q,\frac{\delta}{m-1})\right)\geq 1-\delta.
    \end{equation}
    We now work on the event 
    $$
        \Omega_\delta^q := \left\{ \forall k \in \{ 1,\cdots, K \}, \covmin(\epsilon_k^q)\geq \covminmm(\epsilon^q_k, \frac{\delta}{m-1})\right\}.
    $$
    We set $\Epsilon^q := \left( \bigcup_{k=1}^{K-1} (\epsilon_q^k, \epsilon_{k+1}] \right) \cup (\epsilon_K^q, +\infty)$,
    where $(\epsilon^q_k,\epsilon_{k+1}]=\varnothing$ if $\epsilon_k = \epsilon_{k+1}$.
    Note that $\Epsilon^q \subseteq \Epsilon^{q+1} \subseteq (\epsilon_1, +\infty)$ and $\cup_{q\geq 1}\Epsilon^q = (\epsilon_1, +\infty)$

    Now, let $\epsilon \in \Epsilon^q$. There exists $k\in\{1,\cdots,K\}$ such that $k\leq K-1$ and $\epsilon \in (\epsilon_k^q, \epsilon_{k+1}]$, or $k=K$ and $\epsilon\in(\epsilon^q_K, +\infty)$. In particular, $\epsilon_k < \epsilon_k^q$ whatever the value of k.

    Note that, if $k \leq K-1$, we have $0<\epsilon_{k+1}<+\infty$ and therefore $1-\covmin(\epsilon_{k+1})\leq\frac{k+1}{m}$.
    This entails (using $\epsilon\leq\epsilon_{k+1}$ and the fact that $1-\covmin$ is non-decreasing)
    $$
        1-\covmin(\epsilon)\leq\frac{k+1}{m},
    $$
    which is also true $k=K$ (by def of $\epsilon_{K+1}$ if $K+1\leq m-1$, or trivial if $K=m=1$).
    Therefore, whatever the value of $k$ above,

    \begin{equation}
        \begin{aligned}
            1 - \covmin(\epsilon) &\leq \frac{k+1}{m} \\
            &\leq 1 - \covmin(\epsilon_k^q)+\frac{1}{m} &\text{(since } 1 - \covmin(\epsilon_k^q) \geq \frac{k}{m}, \text{ by def of $\epsilon_k$ and $\epsilon_k^q > \epsilon_k)$} \\
            &\leq 1 - \covminmm(\epsilon_k^q,\frac{\delta}{m-1}) + \frac{1}{m} & \text{(we work on $\Omega_{\delta}^{q}$)} \\
            &\leq 1 - \covminmm(\epsilon,\frac{\delta}{m-1}) + \frac{1}{m} & \text{(by $\epsilon_k^q \leq \epsilon$ and monotonicity of $\covminmm(\cdot,\frac{\delta}{m-1})$)}\\
        \end{aligned}
    \end{equation}

    Rearranging terms, we proved that, on the event $\Omega_{\delta}^q$,

    \begin{equation}
        \forall \epsilon \in \Epsilon^q, \covmin(\epsilon) \geq \covminmm(\epsilon,\frac{\delta}{m-1}) - \frac{1}{m}.
    \end{equation}

    We now let $q\rightarrow+\infty$. More precisely, recalling that $(\epsilon_1, +\infty)=\cup_{q\geq 1} \Epsilon^q$,

    \begin{equation*}
        \begin{aligned}
            \mathbb{P}&\left(\forall \epsilon \in (\epsilon_1, +\infty), \covmin(\epsilon) \geq \covminmm(\epsilon, \frac{\delta}{m-1})-\frac{1}{m}\right)\\
            &=\mathbb{P}\left(\cap_{q \geq 1} \{\forall \epsilon \in \Epsilon^q, \covmin(\epsilon) \geq \covminmm(\epsilon, \frac{\delta}{m-1})-\frac{1}{m}\}\right)\\
            &=\lim\limits_{q\rightarrow +\infty} \mathbb{P}\left(\forall \epsilon \in \Epsilon^q, \covmin(\epsilon) \geq \covminmm(\epsilon,\frac{\delta}{m-1})-\frac{1}{m}\right) & \text{(since $\Epsilon^q \subseteq \Epsilon^{q+1}$)}\\
            &\geq 1-\delta. & \text{(since $\mathbb{P}(\Omega_\delta^q)\geq 1-\delta$)}
        \end{aligned}
    \end{equation*}
    Nothing that, if $\epsilon_1 > 0$, then $1-\covmin(\epsilon) \leq \frac{1}{m}$ for all $\epsilon \in [0,\epsilon_1]$, concludes the proof.   
\end{proof}

\section{On Measurability Details}
\label{sec:measurability}

In this short section, we discuss minor mathematical details that are yet important to ensure that all probabilities under study are well defined. In short, probabilities should be studied for sets that are \emph{measurable}, which can fail in general when manipulating, e.g., continuously-infinite suprema or infima of measurable functions.

We start by describing two measurability issues in items 1 and 2 below. We then explain that, if the assumptions of the paper hold true, then we work with continuous and thus measurable functions, so that these issues do not arise (see Proposition~\ref{prop:supinf-continuous}).

\begin{enumerate}
    \item The modified scores $\sse(x,y) := \sup\limits_{\tilde{x}\in\bouleps(x)} s(\tilde{x},y)$ and $\sie(x,y):=\inf\limits_{\tilde{x}\in\bouleps(x)} s(\tilde{x},y)$ might not be measurable functions of $(x,y)$. This condition is required so that the sets $\left\{ Y\in\Bar{C}_{\alpha, \epsilon}(X)\right\}=\left\{\sie(X,Y)\leq q_\alpha \right\}$ and $\left\{ Y\in\underline{C}_{\alpha, \epsilon}(X)\right\}=\left\{\sse(X,Y)\leq q_\alpha \right\}$ are measurable (i.e., well defined events).
    \item Even so, the function $(x,y)\mapsto\sup\limits_{\tilde{x}\in\bouleps(x)} \sie(\tilde{x},y)$ might not be measurable, which is required for the quantity $\mathbb{P}\left(\forall \tilde{x} \in \bouleps(X), Y\in \Bar{C}_{\alpha,\epsilon}(\tilde{x})\right)$ to be well defined.
\end{enumerate}

\paragraph{Recalling Assumptions \ref{assum:X-s}}
\begin{enumerate}[label=A\arabic*]
    \item $\mathcal{X}$ is a convex and closed subset of $\R^d$
    \item $s(\cdot, y)$ is continuous for all $y\in\mathcal{Y}$
\end{enumerate}

\begin{proposition}
\label{prop:supinf-continuous}
    Let $\epsilon\geq 0$ and assume that A1 and A2 from Assumption \ref{assum:X-s} hold true. Then the functions $\sie(\cdot, y)$, $\sse(\cdot, y)$, and $x\mapsto\sup\limits_{\tilde{x}\in\bouleps(x)} \sie(\tilde{x},y)$ are continuous for all $y\in\mathcal{Y}$.
\end{proposition}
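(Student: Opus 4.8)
The plan is to reduce everything to a single statement and apply it three times: \emph{for any continuous $g:\mathcal{X}\to\R$, the map $\underline{g}_\epsilon:x\mapsto\inf_{\tilde{x}\in\bouleps(x)}g(\tilde{x})$ is continuous on $\mathcal{X}$}. Applying this to $-g$ and negating shows that $\overline{g}_\epsilon:x\mapsto\sup_{\tilde{x}\in\bouleps(x)}g(\tilde{x})$ is continuous as well. Granting this, $\sie(\cdot,y)=\underline{g}_\epsilon$ and $\sse(\cdot,y)=\overline{g}_\epsilon$ for $g=s(\cdot,y)$ (continuous by A2), so both are continuous; and $x\mapsto\sup_{\tilde{x}\in\bouleps(x)}\sie(\tilde{x},y)$ equals $\overline{(g_1)}_\epsilon$ for the now-continuous function $g_1=\sie(\cdot,y)$, hence is continuous too. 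Continuity then yields Borel measurability, which is exactly what removes the concerns of items~1 and~2 above. The case $\epsilon=0$ is trivial (then $\bouleps(x)=\{x\}$), so I would assume $\epsilon>0$.

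For the lemma, first I would observe that $\bouleps(x)=\mathcal{X}\cap\{\tilde{x}:\norm{\tilde{x}-x}\leq\epsilon\}$ is nonempty (it contains $x$) and compact (closed and bounded in $\R^d$, using A1), so the infimum defining $\underline{g}_\epsilon(x)$ is attained. Fix $x_0\in\mathcal{X}$ and $\eta>0$. Since $K:=\{\tilde{x}\in\mathcal{X}:\norm{\tilde{x}-x_0}\leq\epsilon+2\}$ is compact, $g$ is uniformly continuous on it, so there is $\delta\in(0,1)$ with $\abs{g(u)-g(v)}\leq\eta$ whenever $u,v\in K$ and $\norm{u-v}\leq\delta$. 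The core step is the one-sided comparison: for any $x,x'\in\mathcal{X}$ with $\norm{x-x_0}\leq 1$ and $\norm{x-x'}\leq\delta$ we have $\underline{g}_\epsilon(x')\leq\underline{g}_\epsilon(x)+\eta$. To prove it, take $\tilde{x}\in\bouleps(x)$ attaining $\underline{g}_\epsilon(x)$; if $\norm{\tilde{x}-x'}\leq\epsilon$ then $\tilde{x}\in\bouleps(x')$ and we are done, otherwise use convexity of $\mathcal{X}$ to pick $\tilde{x}'$ on the segment $[x',\tilde{x}]\subset\mathcal{X}$ with $\norm{\tilde{x}'-x'}=\epsilon$. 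Collinearity then gives $\norm{\tilde{x}-\tilde{x}'}=\norm{\tilde{x}-x'}-\epsilon\leq\norm{\tilde{x}-x}+\norm{x-x'}-\epsilon\leq\delta$, and $\tilde{x},\tilde{x}'\in K$, so $g(\tilde{x}')\leq g(\tilde{x})+\eta$ and hence $\underline{g}_\epsilon(x')\leq g(\tilde{x}')\leq\underline{g}_\epsilon(x)+\eta$. This is the retraction argument already used for \eqref{eq:sie-ineq} and depicted in Figure~\ref{fig:preuve-2}. Applying this comparison to the pairs $(x_0,x)$ and $(x,x_0)$, for $x$ with $\norm{x-x_0}\leq\min(1,\delta)$, yields $\abs{\underline{g}_\epsilon(x)-\underline{g}_\epsilon(x_0)}\leq\eta$, i.e. continuity at $x_0$.

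The only genuine obstacle is the interaction between the moving set $\bouleps(x)$ and the constraint set $\mathcal{X}$: translating a near-minimizer of $g$ over $\bouleps(x)$ by $x'-x$ need not land in $\mathcal{X}$, so one cannot simply reduce to the unconstrained ball. Convexity of $\mathcal{X}$ is precisely what lets us retract along a segment back into $\bouleps(x')\subseteq\mathcal{X}$ while displacing by a controlled amount, and the non-compactness of $\mathcal{X}$ is handled by localizing to the fixed compact $K$ around $x_0$ --- the same two devices as in the proof of Proposition~\ref{prop:continuityGamma}.
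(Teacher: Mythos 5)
Your proof is correct and takes essentially the approach the paper intends: the paper omits the argument but points to the uniform-continuity and convex-retraction devices of Proposition~\ref{prop:continuityGamma}, and your lemma (continuity of $x\mapsto\inf_{\tilde{x}\in\bouleps(x)}g(\tilde{x})$ for continuous $g$, proved by localizing to a compact neighborhood of $x_0$ and retracting along the segment $[x',\tilde{x}]$ exactly as for \eqref{eq:sie-ineq}) is precisely that argument, cleanly packaged so that the sup case follows by negation and the third function by composing the lemma with itself. The two-sided conclusion via the symmetric pairs $(x_0,x)$ and $(x,x_0)$, the treatment of $\epsilon=0$, and the collinearity identity $\norm{\tilde{x}-\tilde{x}'}=\norm{\tilde{x}-x'}-\epsilon$ (valid for any norm on a segment) are all sound.
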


The proof follows from uniform continuity arguments that are very similar to those used in the proof of Proposition~\ref{prop:continuityGamma}, and is thus omitted.

\section{Extending VRCP}
\label{app:vanillapredsetcertificates}

In this section, we explictly develop extensions of formal verification methods for robust conformal prediction. First, we recall the following robustness regimes: 

\begin{itemize}
    \item \textbf{Robust conformal prediction set} Here, we return a conformal prediction set that verifies user-specified conformal coverage under adversarial perturbations. However, this certificate can yield overly conservative results on clean data resulting in increased conformal prediction set sizes.
    \item \textbf{Certifiably robust vanilla conformal prediction} Through estimations of the conservative and restrictive prediction sets on an extra set of holdout data $\HoldoutDataset$, our theoretical results ensure that for a given adversarial attack budget, our conformal coverage on adversarially perturbed data will remain within a range of $[\covminmm, \covmaxmp]$ with high probability.
\end{itemize}

In \emph{VRCP}~\cite{vrcp}, the authors develop a method allowing the computation of supersets of the conservative prediction set around a point $x$ leveraging formal verification methods. To this end, they prove that the conservative prediction set verifies robust coverage bounds under adversarial attacks. In order to extend verification methods to the computation of the restrictive prediction set (necessary for the coverage bounds on vanilla CP under attack) and the computation of a lower bound for the robustness radius of a vanilla prediction set, we give the following insights. 

\paragraph{Restrictive prediction set} Given that formal verification methods often return the upper and lower bounds of every logit of the output layer. This facilitates the computation of a subset of the restrictive prediction set with formal verification methods.

\section{About Score Functions: Sigmoid vs Softmax}
\label{app:scores}

\subsection{Vanilla CP Setting}

We recall the following score definitions: 

\begin{equation}
\begin{aligned}
    s_\mathrm{LAC \ Softmax} (x, y) &= 1 - \text{softmax} \left( \frac{f(x)}{T} \right)_y \\
    s_\mathrm{LAC \ Sigmoid} (x, y) &= 1 - \text{sigmoid} \left( \frac{f(x)_y - b}{T} \right)
\end{aligned}
\end{equation}

To evaluate the impact of using our LAC Sigmoid score as compared to a classic LAC Softmax score function, we provide the following empirical results for vanilla CP on the \textsl{CIFAR-10} dataset with a ResNet50 model with $\alpha = 0.1$:

\begin{table}[h]
  \centering
  \sisetup{table-format=2.2, detect-weight=true, detect-family=true}
  \begin{tabular}{l S S}
    \toprule
    Method       & {Coverage (\%)} & {Set size} \\
    \midrule
    LAC Softmax  & 90.04           & 1.088      \\
    LAC Sigmoid  & 90.28           & 1.144      \\
    \bottomrule
  \end{tabular}
  \caption{Comparison of conformal score performance LAC Softmax vs LAC Sigmoid.}
  \label{tab:scores}
\end{table}

This ablation study results in marginally bigger vanilla CP set sizes for LAC Sigmoid scores compared to softmax ones with scaled temperature.

\subsection{Robust CP Setting}

Using the 1-Lipschitz nature of our models, we can also use exact worst-case LAC Sigmoid and Softmax score computations under adversarial logit perturbations. Instead of relying on Eq.~\ref{eq:lipschitz_ineq}, we can leverage the monotonicity of the sigmoid function to compute tighter bounds, e.g:

\begin{equation}
    1 - \text{sigmoid}\left( \frac{f(\Tilde{x})_y + L_n \times \epsilon - b} {T} \right) \leq \min_{x \in \bouleps(\Tilde{x})} 
    s_\mathrm{LAC \ Sigmoid} (x, y)
\end{equation}

where $T$ and $b$ are the temperature and the bias of the score. This inequality yields tighter bounds than those based solely on global Lipschitz constants, with no extra cost.

Similarly, given that the softmax function exhibits monotonicity in each of its parameters (ceteris paribus), its maximum and minimum values within a hyper-rectangular region are attained at the corners of that region, i.e.:

\begin{equation}
    1 - \text{softmax}
    \begin{pmatrix}
        (f(\Tilde{x})_0 - L_n  \times \epsilon) / T \\
        \vdots \\
        (f(\Tilde{x})_y + L_n \times \epsilon)/T \\
        \vdots \\
        (f(\Tilde{x})_{c-1} - L_n \times \epsilon) / T
    \end{pmatrix}_y \leq \min_{x \in \bouleps(\Tilde{x})} s_\mathrm{LAC \ Softmax}(x,y) 
\end{equation}

where $T$ is the temperature scaling factor of the softmax.

As shown below, for a 1-Lipschitz VGG with 25M parameters on \textsl{CIFAR-10} with $\epsilon=0.03$ and $\alpha = 0.1$, LAC Sigmoid demonstrates considerably smaller robust prediction sets.

\begin{table}[h]
  \centering
  \sisetup{
    table-format=2.1,      
    table-number-alignment = center,
    detect-weight = true,
    detect-family = true
  }
  \begin{tabular}{l S S}
    \toprule
    Method        & {Coverage (\%)} & {Set size} \\
    \midrule
    LAC Softmax   & 95.6            & 2.38       \\
    LAC Sigmoid   & 94.9            & 1.72       \\
    \bottomrule
  \end{tabular}
  \caption{Robust CP performance for different conformal scores on \textsl{CIFAR-10}. Under the same conditions as Fig~\ref{fig:fastcp_results}.}
  \label{tab:robust_cp}
\end{table}

Although the LAC Sigmoid score is marginally worse in vanilla CP conditions, the superior performances obtained in Table~\ref{tab:robust_cp} motivates its use in the paper.

\section{Model Implementation}
\label{app:models_experiments}

\subsection{Training Time Overhead}
\label{app:speed_training}

In this section we describe the computational overhead of our implementation on a \emph{CIFAR-10} shaped set of random data. In our experimental setup, we use a standard neural network with two convolutional layers followed by max pooling operations and a linear layer. We study how a drop-in replacement of the vanilla PyTorch layers by our chosen Lipschitz constrained layers affects the overall training time of our network. Importantly, our implementation is batch-size independent since it only relies on the values of the weights. Therefore the overall cost of training Lipschitz constrained networks is mitigated when using large batch sizes. 

\begin{table}[h]
  \centering
  \sisetup{
    detect-weight = true,
    detect-family = true,
    table-number-alignment = center
  }
  \begin{tabular}{
    S[table-format=4.0]
    S[table-format=+3.2]
  }
    \toprule
    {Batch size} & {Train time (\%)} \\
    \midrule
    1024 & +20.31 \\
    2048 & +10.26 \\
    \bottomrule
  \end{tabular}
  \caption{Training time on random data of shape \((32,32,3)\) when dropping in the Lipschitz‐constrained layers. For reference, our ResNeXt model of Figure 3 introduces a 9.6\% runtime overhead on TinyImageNet compared to an identical unconstrained model.}
  \label{tab:speed_training}
\end{table}

For more information regarding the computational overhead of Lipschitz constrained architectures w.r.t standard unconstrained architectures, we refer the reader to the comprehensive benchmark of \cite{boissin2025adaptiveorthogonalconvolutionscheme}.

\paragraph{Improvements} In our paper we use a relatively standard implementation of Lipschitz parametrized networks with an orthogonality constraint. Many recent works have focused on providing efficient and expressive Lipschitz network implementations \cite{sll,gloro,cpl} that could be applied in the context of our study. 

\subsection{Placement of Method}
\paragraph{Overall placement of method} See Table \ref{tab:huge_table} for an overall comparison of certifiably robust methods for conformal prediction.

\paragraph{Hyperparameters of all methods} \label{par:hyperparams} We find that other methods have the following hyperparameters : 

\begin{table}[ht]
  \centering
  \begin{tabular}{@{}ll@{}}
    \toprule
    \textbf{Method} & \textbf{Hyperparameters} \\
    \midrule
    \emph{RSCP}           & Smoothing strength \(\sigma\), Monte Carlo samplings \(n_{mc}\). \\
    \emph{RSCP\(+\)}      & Smoothing strength \(\sigma\), Monte Carlo samplings \(n_{mc}\), confidence of smoothing \(\beta\). \\
    +  PTT         & Temperature \(T\), bias \(b\), number of holdout points \(n_{\text{holdout}}\). \\
    +  RCT         & Temperature \(\tau^{\mathrm{soft}}\), regularization factors \(\lambda\) and \(\kappa\). \\
    \emph{aPRCP}          & Conservativeness hyperparameter \(s\). \\
    \emph{CAS}            & Smoothing strength \(\sigma\), Monte Carlo samplings \(n_{mc}\). \\
    \emph{VRCP}           & Choice of formal verification method (counts as one hyperparameter) plus any  score specific parameters. \\
    \emph{lip-rcp} & Score temperature \(T\) and bias \(b\). \\
    \bottomrule
  \end{tabular}
  \caption{Hyperparameters for each certified‐robustness method}
  \label{tab:method_hyperparams}
\end{table}

\section{Experimental Settings}
\label{app:experimental_settings}
Robust CP methods are heavily dependent on the quality and compatibility of the underlying model. Therefore, we use separate neural networks to ensure a fair evaluation of robust CP methods. In the following sections, we define the models we used for benchmarking both robust CP and worst-case vanilla CP coverage bounds.

\subsection{System Requirements}

All experiments were conducted on a system equipped with two NVIDIA GeForce RTX 4090 GPUs, each providing 24 GB of GDDR6X memory.

\subsection{VRCP Models}

Given that formal verification methods have an inherent trade-off between computational overhead and tightness of their estimations, we use specifically tailored models from Appendix B of \citet{vrcp}. We then compare these models to Lipschitz by design models in the experiments of \S~\ref{sec:resultsclassic} and \S~\ref{sec:resultscoverage}.

Importantly, using larger and deeper models with verification methods is not only computationally punitive, it also usually tends to worsen the obtained bounds as they become looser and looser.

\subsection{Smoothing Models}

We use a ResNet50 architecture in the experiments we run using \emph{CAS} as it offers a good baseline for robust CP results as done in \cite{cas}. Given the already costly nature of smoothing methods at inference, we follow a standard scheme at training time to avoid additional computational burden. An exception stands for the \texttt{CIFAR-10} dataset: models trained with Gaussian noise as per the procedure of \citet{salman2020provablyrobustdeeplearning} still fail in low MC iteration settings. Indeed, we test a ResNet110 model trained with noise augmentation from the aforementioned procedure: yet it still exhibits uninformative set sizes of $10$ for $n_\mathrm{mc}=64$ on every point of the test set. Moreover, we use the recommended $\sigma$ value of $\sigma = 2\cdot \epsilon$ for every smoothing experiment conducted.

\subsection{Lipschitz Models}
\label{app:model_implementation}

The architecture we use for our experiments on \textsl{CIFAR-10}, \textsl{CIFAR-100} and \textsl{TinyImageNet} datasets follows a ResNeXt-like design, with an initial convolutional stem followed by four stages of increasing feature channels (64 → 128 → 256 → 512). Each stage comprises grouped-convolution residual blocks, with downsampling applied between stages to reduce spatial dimensions. The network then employs global average pooling, a fully connected bottleneck, and a classification layer. 


Regarding our experiments on the \textsl{ImageNet} dataset, we leverage a more efficient implementation of Lipschitz parametrized networks, as described in \cite{boissin2025adaptiveorthogonalconvolutionscheme}. This implementation can be found in the \href{https://github.com/deel-ai/orthogonium}{\texttt{orthogonium}} library.

Regarding our \textsl{ImageNet} model we use the following architecture. Starting with two strided convolutions (7×7 → 3×3 kernels) that reduce spatial resolution while expanding channels to 256. Four subsequent processing stages progressively double channel dimensions (256 → 512 → 1024 → 2048), each containing three residual blocks with depthwise 5×5 convolutions. Blocks employ channel expansion/contraction (2 × width), GroupSort2 activations, and learned residual mixing that respects 1-Lipschitz constraints. Then, the spatial features are condensed via global average pooling (7×7 window) before final classification by a linear layer that enforces the 1-Lipschitz condition on each output. 

\paragraph{Training objectives} Our neural networks are trained with Optimal Transport inspired objectives such as the Hinge Kantorovich-Rubinstein loss function introduced in~\cite{acheivingrobustness}. These loss functions are particularly effective to enforce model robustness.

\subsection{Training Hyperparameters}

We train our Lipschitz neural networks with the AdamW optimizer with a learning rate 1e-3. Also, we use the \texttt{SoftHKRMulticlassLoss} from the \href{https://github.com/deel-ai/deel-torchlip}{\texttt{deel-torchlip}} library with the following standard values:

\begin{table}[h]
  \centering
  \sisetup{
    detect-weight = true,
    detect-family = true,
    table-number-alignment = center
  }
  \begin{tabular}{l
                  S[table-format=1.1]
                  S[table-format=1.1]
                  S[table-format=3.0]
                  S[table-format=1.3]}
    \toprule
    \textbf{Dataset}   & {\textbf{Margin}} & {\textbf{Temperature}} & {\textbf{Epochs}} & {\textbf{Alpha}} \\
    \midrule
    \textsl{CIFAR-10}           & 0.6               & 5.0                    & 130               & 0.975            \\
    \textsl{CIFAR-100}          & 0.6               & 5.0                    & 220               & 0.975            \\
    \textsl{TinyImageNet}       & 0.3               & 5.0                    &  80               & 0.975            \\
    \bottomrule
  \end{tabular}
  \caption{Standard training hyperparameter values for the loss function on the different datasets.}
  \label{tab:hyperparams}
\end{table}

\section{Calibration Poisoning Results}
\label{app:poisoning}

\paragraph{Calibration time label flipping attacks} Most conformal score functions are often not Lipschitz continuous with respect to $y \in \labelspace$. Therefore we devise no straightforward certificate for label flipping attacks. However, the defense to label flipping attacks defined in \cite{cas} that leverages the computation of a maximum quantile shift when scores are permuted is valid for our networks too.

\paragraph{Calibration time feature poisoning attacks} In the context of calibration-time feature poisoning however the Lipschitz condition of our score with respect to $x \in \inputspace$ becomes greatly advantageous. Indeed, the problem of computing the maximum quantile shift becomes simplified in the sense that every score verifies a $L_s$ Lipschitz condition and is interdependent from the scores of other calibration samples for the true class. We devise a simple Linear Programming script described in Figure~\ref{fig:poisoning_quick} to handle the computation of the maximum and minimum quantile shift under attack of budget $(k,\epsilon)$. We verify our Linear Programming solution empirically on the \emph{CIFAR-10} dataset (see Table \ref{tab:featurepoisoning}). 

\paragraph{NB} Importantly, our algorithm assumes that the quantile computation done during the calibration step returns $\calthreshold=\overrightarrow{R}_{\lceil (n+1)(1-\alpha)\rceil}$.

\begin{table}[h!]
  \centering
  \small
  \sisetup{
    detect-weight=true,
    detect-family=true,
    table-number-alignment = center,
    table-format=1.4
  }
  \begin{tabular}{
      S[table-format=1.2] 
      S[table-format=2.0] 
      l 
      S 
      S
    }
    \toprule
    {\(\epsilon\)} & {k} & {Metric} & {Mean} & {Std Dev} \\
    \midrule
    0     & {--} & Val.\ accuracy           & 0.7260 & {--}    \\
          &      & Set size                 & 2.0690 & 0.0387  \\
    0.25  & 6    & Robust CP \(\gamma\)     & 0.8982 & 0.0085  \\
          &      & Robust CP set size       & 2.0699 & 0.0467  \\
    0.25  & 50   & Robust CP \(\gamma\)     & 0.9090 & 0.0068  \\
          &      & Robust CP set size       & 2.1872 & 0.0466  \\
    \bottomrule
  \end{tabular}
  \caption{Summary of empirical coverage and prediction set sizes across different attack budgets \((k,\epsilon)\) for feature poisoning on the calibration set (\(n_{cal}=4750\)). Each experiment was repeated 10 times with a randomly selected calibration subset.}
  \label{tab:featurepoisoning}
\end{table}

\begin{figure}[h!]
    \centering
    \includegraphics[width=0.5\linewidth]{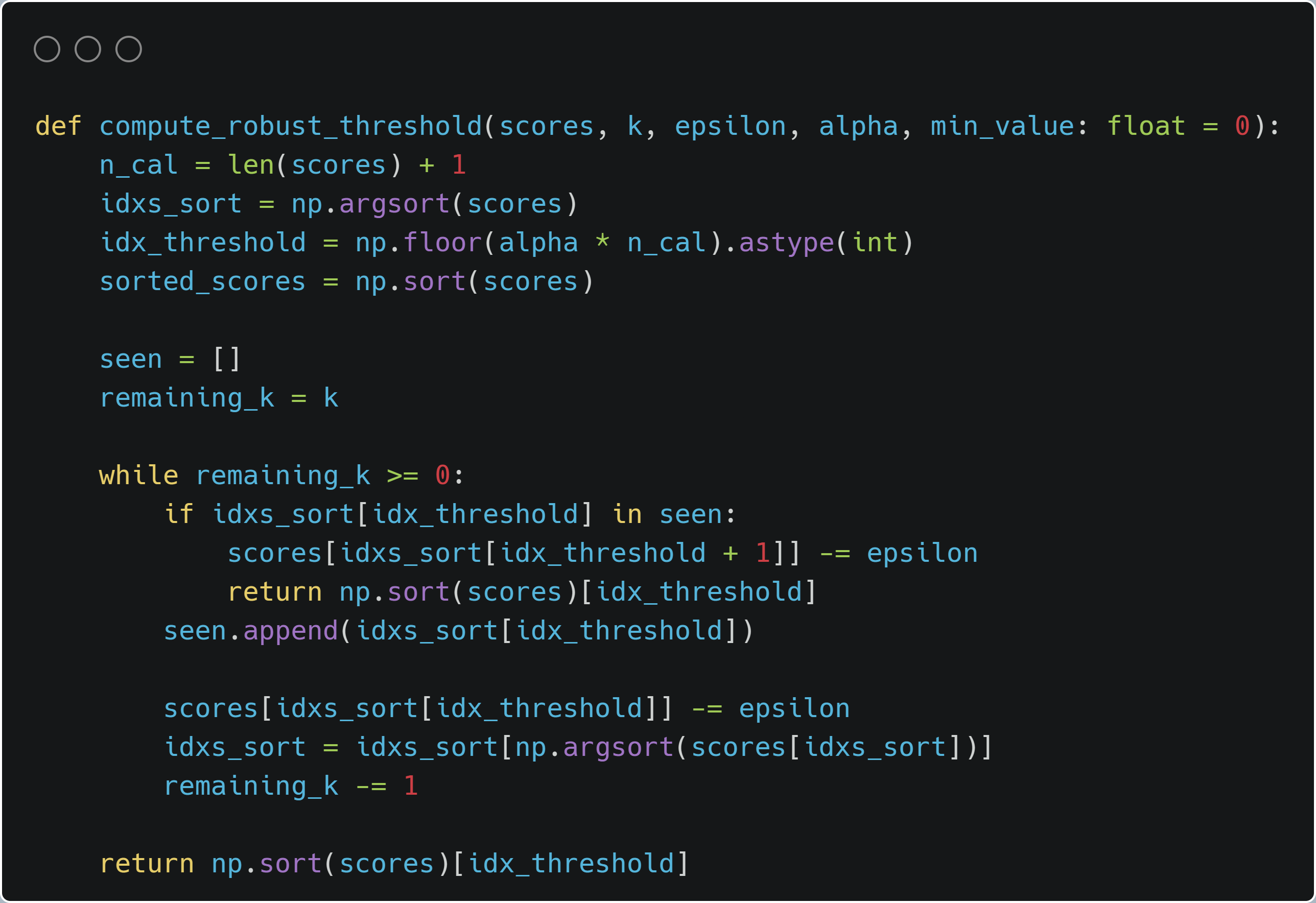}
    \caption{Our function for computing the maximum quantile shift and therefore certifying the robustness of prediction sets under calibration time feature poisoning attacks.}
    \label{fig:poisoning_quick}
\end{figure}

\pagebreak

\section{About the Tightness of Score Bounds}
\label{app:tightness}
To estimate how tight our score estimations are we evaluate all our methods on the same model. We train a VGG-like 1-LipNet with 10.7M parameters from \citet{boissin2025adaptiveorthogonalconvolutionscheme} on CIFAR-10. Then, we evaluate robust CP methods (average across 10 runs).

\begin{table}[h!]
  \centering
  \sisetup{
    detect-weight = true,
    detect-family = true,
    table-number-alignment = center
  }
  \begin{tabular}{
    l
    S[table-format=2.2]
    S[table-format=1.3]
    S[table-format=4.0]
  }
    \toprule
    \textbf{Method}                & {\textbf{Coverage (\%)}} & {\textbf{Set size}} & {\textbf{Runtime (s)}} \\
    \midrule
    \emph{CAS} \(n_{\mathrm{mc}}=1024\)   & 94.60                    & 2.302               & 2615                  \\
    \emph{lip-rcp}                        & 92.83                    & 1.889               &   10                  \\
    \emph{VRCP-I/C} (CROWN)               & {OOM}                    & {OOM}               & {N/A}                 \\
    \bottomrule
  \end{tabular}
  \caption{Average robust CP coverage, set size, and runtime (per run) with \(\epsilon=0.03\) across 10 runs.}
  \label{tab:robust_cp_performance}
\end{table}

CROWN does not scale to such a deep network due to its inner complexity. Additionally, given the cost of the CAS method, we did not tune the $\sigma$ smoothing hyperparameter set it to $\sigma = 2.\epsilon$ as recommended in previous works \cite{rscp_plus}. Optimizing this hyperparameter could lead to better results as obtained in \cite{bincp}.

\end{document}